%%%%%%%% ICML 2025 EXAMPLE LATEX SUBMISSION FILE %%%%%%%%%%%%%%%%%

\documentclass{article}

% Recommended, but optional, packages for figures and better typesetting:
\usepackage{microtype}
\usepackage{graphicx}
\usepackage{subfigure}
\usepackage{booktabs} % for professional tables

% hyperref makes hyperlinks in the resulting PDF.
% If your build breaks (sometimes temporarily if a hyperlink spans a page)
% please comment out the following usepackage line and replace
% \usepackage{icml2025} with \usepackage[nohyperref]{icml2025} above.
\usepackage{hyperref}

% Attempt to make hyperref and algorithmic work together better:

% Use the following line for the initial blind version submitted for review:
% \usepackage{icml2025}
\PassOptionsToPackage{table,xcdraw,dvipsnames}{xcolor}
% If accepted, instead use the following line for the camera-ready submission:
\usepackage[accepted]{icml2025}

% For theorems and such
\usepackage{amsmath}
\usepackage{amssymb}
\usepackage{mathtools}
\usepackage{amsthm}

% if you use cleveref..
\usepackage[capitalize,noabbrev]{cleveref}

%%%%%%%%%%%%%%%%%%%%%%%%%%%%%%%%
% THEOREMS
%%%%%%%%%%%%%%%%%%%%%%%%%%%%%%%%
\theoremstyle{plain}
\newtheorem{theorem}{Theorem}[section]

\newtheorem{lemma}[theorem]{Lemma}

\newtheorem{fact}[theorem]{Fact}

\theoremstyle{definition}

\theoremstyle{remark}

% Todonotes is useful during development; simply uncomment the next line
%    and comment out the line below the next line to turn off comments
%\usepackage[disable,textsize=tiny]{todonotes}
\usepackage[textsize=tiny]{todonotes}

% Teal added
\usepackage{subcaption} % For subfigures
\usepackage{bbm}
\usepackage{graphicx}
\usepackage{mathtools}
\usepackage{thm-restate}

\definecolor{gold}{rgb}{1,0.84,0}
\definecolor{silver}{rgb}{0.75,0.75,0.75}
\definecolor{bronze}{rgb}{0.8,0.5,0.2}
\definecolor{whitesilver}{rgb}{1,1,1}
\definecolor{whitebronze}{rgb}{1,1,1}
\definecolor{whitegold}{rgb}{1,1,1}

\DeclareMathOperator*{\argmin}{arg\,min}
\DeclareMathOperator*{\E}{\mathbb{E}}

% Macros
\newcommand{\hide}[1]{}

\newcommand{\kshap}{Kernel SHAP}

\newcommand{\psemi}{\phi}

\newcommand{\phivec}{\boldsymbol{\phi}}
\newcommand{\approxphivec}{\tilde{\boldsymbol{\phi}}}

% The \icmltitle you define below is probably too long as a header.
% Therefore, a short form for the running title is supplied here:
\icmltitlerunning{Kernel Banzhaf: A Fast and Robust Estimator for Banzhaf Values}

\begin{document}

\twocolumn[
\icmltitle{Kernel Banzhaf: A Fast and Robust Estimator for Banzhaf Values}

% It is OKAY to include author information, even for blind
% submissions: the style file will automatically remove it for you
% unless you've provided the [accepted] option to the icml2025
% package.

% List of affiliations: The first argument should be a (short)
% identifier you will use later to specify author affiliations
% Academic affiliations should list Department, University, City, Region, Country
% Industry affiliations should list Company, City, Region, Country

% You can specify symbols, otherwise they are numbered in order.
% Ideally, you should not use this facility. Affiliations will be numbered
% in order of appearance and this is the preferred way.
\icmlsetsymbol{equal}{*}
\icmlsetsymbol{intern}{\textsuperscript{\textdagger}}

\begin{icmlauthorlist}
\icmlauthor{Yurong Liu}{equal,intern,yyy}
\icmlauthor{R. Teal Witter}{equal,yyy}
\icmlauthor{Flip Korn}{comp}
\icmlauthor{Tarfah Alrashed}{comp}
\icmlauthor{Dimitris Paparas}{comp}
\\
\icmlauthor{Christopher Musco}{yyy}
\icmlauthor{Juliana Freire}{yyy}
\end{icmlauthorlist}
\icmlaffiliation{yyy}{New York University}
\icmlaffiliation{comp}{Google Research}

\icmlcorrespondingauthor{R. Teal Witter}{rtealwitter@nyu.edu}
% \icmlcorrespondingauthor{Yurong Liu}{yurong.liu@nyu.edu}

% You may provide any keywords that you
% find helpful for describing your paper; these are used to populate
% the "keywords" metadata in the PDF but will not be shown in the document
\icmlkeywords{Machine Learning, ICML}

\vskip 0.3in
]

% this must go after the closing bracket ] following \twocolumn[ ...

% This command actually creates the footnote in the first column
% listing the affiliations and the copyright notice.
% The command takes one argument, which is text to display at the start of the footnote.
% The \icmlEqualContribution command is standard text for equal contribution.
% Remove it (just {}) if you do not need this facility.

%\printAffiliationsAndNotice{}  % leave blank if no need to mention equal contribution
\printAffiliationsAndNotice{\icmlEqualContribution\; \icmlIntern} % otherwise use the standard text.

\begin{abstract}
Banzhaf values provide a popular, interpretable alternative to the widely-used Shapley values for quantifying the importance of features in machine learning models. Like Shapley values, computing Banzhaf values exactly requires time exponential in the number of features, necessitating the use of efficient estimators. Existing estimators, however, are limited to Monte Carlo sampling methods. In this work, we introduce Kernel Banzhaf, the first regression-based estimator for Banzhaf values. Our approach leverages a novel regression formulation, whose exact solution corresponds to the exact Banzhaf values. Inspired by the success of Kernel SHAP for Shapley values, Kernel Banzhaf efficiently solves a sampled instance of this regression problem. Through empirical evaluations across eight datasets, we find that Kernel Banzhaf significantly outperforms existing Monte Carlo methods in terms of accuracy, sample efficiency, robustness to noise, and feature ranking recovery. Finally, we complement our experimental evaluation with strong theoretical guarantees on Kernel Banzhaf's performance.
\end{abstract}

\section{Introduction}
The increasing complexity of AI models has intensified challenges associated with model interpretability. Modern machine learning models, such as deep neural networks and ensemble methods, often operate as
``opaque boxes." 
%where the decision-making process is not readily apparent. 
This opacity makes it difficult for users to understand and trust model predictions, especially in decision-making scenarios like healthcare, finance, and legal applications, which require rigorous justifications. Thus, there is a pressing need for reliable explainability tools to bridge the gap between complex model behaviors and human understanding.

Among the various methods employed within explainable AI, game-theoretic approaches have gained prominence for quantifying the contribution of features both to the overall performance of a machine learning model, and to individual predictions made by the model. 
The most well-known game-theoretic approach is based on \emph{Shapley values}, which provide a principled way to attribute the contribution of $n$ individual players to the outcome of a cooperative game, which is defined by a set function $v$ that maps every subset of $\{1, \ldots, n\}$ to a real value \citep{shapley:book1953}. 

In the context of feature attribution, each ``player'' is a feature and $v$ maps a subset of features to, e.g., the overall test loss when the chosen features are used, or the prediction made for a specific individual given access to the chosen features. 
%$v(S)$, where $S$ represents a subset of features.
The Shapley value quantifies the average marginal contribution of a feature on the set function, computed as the weighted average over all possible combinations of features included in the model \citep{lundberg2017unified}.
More formally, the Shapley value $\phi_i$ of a player $i \in \{1,\ldots,n\}$ is
\begin{align}\label{eq:shapley_def}
    \phi_i^{\textnormal{shap}} =
    \frac{1}{n} \sum_{S \subseteq [n] \setminus \{i\}} \binom{n-1}{|S|}^{-1} [v(S \cup \{i\}) - v(S)].
\end{align}

While primarily associated with feature attribution \citep{lundberg2017unified,karczmarz2022improved}, game-theoretic methods like Shapley values have also contributed to broader machine learning tasks beyond explainability, such as feature selection \citep{covert2020sage} and data valuation \citep{ghorbani2019datashapley,wang2023databanzhaf}. 

An alternative to Shapley values are Banzhaf values, which also compute each individual's contribution to an overall outcome~\citep{banzhaf1965,lehrer1988axiomatization}. 
% Although Banzhaf values do not satisfy the Efficiency property, they meet the 2-Efficiency property \citep{lehrer1988axiomatization}. 
While Shapley values are more widely used, 
Banzhaf values are often considered more intuitive for AI applications since they 
% consider combinations rather than permutations of coalitions,
%evaluate each subset of players without weighting by permutations,
treat each subset of players as equally important, directly measuring the impact of each player across all possible combinations.
In particular, the Banzhaf value is
\begin{align}\label{eq:banzhaf_def}
    \phi_i^{\textnormal{banz}} = \frac1{2^{n-1}} \sum_{S \subseteq [n] \setminus \{i\}} [v(S \cup \{i\}) - v(S)].
\end{align}
In contrast, the Shapley values use non-uniform weight $\binom{n-1}{|S|}^{-1}$, which depends on the subset size.
It has also been observed that Banzhaf values tend to be more robust than Shapley values in the context of explainable AI \citep{karczmarz2022improved,wang2023databanzhaf}. These benefits have led to the broad use of Banzhaf values for feature attribution \citep{datta2015,kulynych2017,sliwinski2018,patel2021high} and, more recently, for data valuation \cite{wang2023databanzhaf,li2024robust}.

For general set functions, the exact computation of both Shapley and Banzhaf values requires at least $2^n$ time, where $n$ is the number of features in our problem. In particular, we must evaluate $v(S)$ for every possible subset $S$ of $\{1,\ldots, n\}$. Accordingly, the problem of \emph{approximating} Shapley values at lower cost has been widely studied, especially for model explanation. A leading method for Shapley values is \kshap\ \citep{lundberg2017unified}, a model-agnostic technique that leverages a connection to linear regression, approximating Shapley values by solving a sampled weighted least squares problem \citep{charnes1988extremal, lundberg2017unified}.
\kshap\ has been further improved with paired sampling \citep{covert2020improving} and leverage score sampling \cite{musco2024leverage}.

In contrast to Shapley values, only a few algorithms have been proposed to approximate Banzhaf values for arbitrary set functions.
For tree-structured set functions, such as those corresponding to decision tree based models, exact Banzhaf values can be efficiently computed \citep{karczmarz2022improved}.
For general set functions, like those corresponding to neural networks models, Monte Carlo sampling (i.e., estimating the sum in \eqref{eq:banzhaf_def} via a subsample) can be used to estimate each Banzhaf value separately \citep{merrill1982approximations,bachrach2010approximating}.
The Maximum Sample Reuse (MSR) algorithm is a variant of Monte Carlo sampling that reuses samples for the estimates of different Banzhaf values \citep{wang2023databanzhaf}. However, even with reuse, the naive Monte Carlo estimator typically requires a large number of samples (which equates to a large number of evaluations of $v$) to obtaining meaningfully accurate estimates for the Banzhaf values. 
% However, MSR estimates are correlated because samples are reused.
%We discuss additional related work in Appendix \ref{app:related_work}.

\paragraph{Our Contributions.} In this work, we show for the first time that, like Shapley values, Banzhaf values can be approximated far more accurately through \emph{approximate regression}. To do so, we introduce a novel regression formulation (concretely, a linear least squares regression problem involving $n$ variables) whose solution exactly corresponds to the Banzhaf values (Section~\ref{sec:lr}). We then show that this problem can be efficiently solved by subsampling rows, each of which requires just a single evaluation of the set function $v$. 

The ultimate payoff is a novel estimator, \emph{Kernel Banzhaf}, that on all datasets tested, far outperforms existing Monte Carlos estimation methods (see \Cref{sec:experiments} for more details). Concretely, for a fixed number of evaluations of $v$, our method typically improves on the accuracy of Banzhaf values estimation by an order of magnitude or more.

While two alternative regression formulations were previously known for Banzhaf values, neither lead to efficient algorithms. One only recovers the values up to an unknown additive shift \cite{ruiz1998family,li2024one} and unfortunately, it is unclear how to estimate this shift efficiently.
The other exactly recovers the Banzhaf values, but only for binary and monotone set functions \cite{hammer1992approximations}. Our formulation, in contrast, exactly recovers the Banzhaf values without any assumptions on the set function.
%\yurong{I feel like this paragraph is a little abrupt here in "Contributions" section}
%\teal{Do you mean the paragraph above or below these comments?}
% Instead, we develop a novel regression problem, the solution of which exactly corresponds to Banzhaf values for all set functions 

We conduct an extensive experimental evaluation of Kernel Banzhaf in comparison to existing Banzhaf estimators, assessing performance in terms of sample efficiency and robustness to noise. We compare accuracy using both the relative squared $\ell_2$-norm error and an objective naturally suggested by our novel regression problem. Furthermore, we evaluate Kernel Banzhaf in a feature ranking task, where the goal is to accurately recover the most important features. Our experiments span diverse model classes, including tree-based models and neural networks. Our findings suggest that Kernel Banzhaf is substantially more accurate and robust than prior estimators (see e.g., Figure~\ref{fig:exact-vs-estimate}).

We complement our experimental evaluation of Kernel Banzhaf with strong theoretical guarantees.
We prove that, with just $O(n \log n)$ evaluations of $v$, Kernel Banzhaf returns a solution with bounded squared $\ell_2$-norm difference to the exact Banzhaf values (see \Cref{coro:l2norm}).
% is less than the $\ell_2$-norm of the Banzhaf values and a regression-specific measure.
Obtaining similar theoretical guarantees for Monte Carlo sampling and improvements like Maximum Sample Reuse requires making a strong assumption on the maximum magnitude of $v(S)$. See Section \ref{sec:guarantees} for further comparison of theoretical results. 

\begin{figure*}
    \centering
    \includegraphics[width=\linewidth]{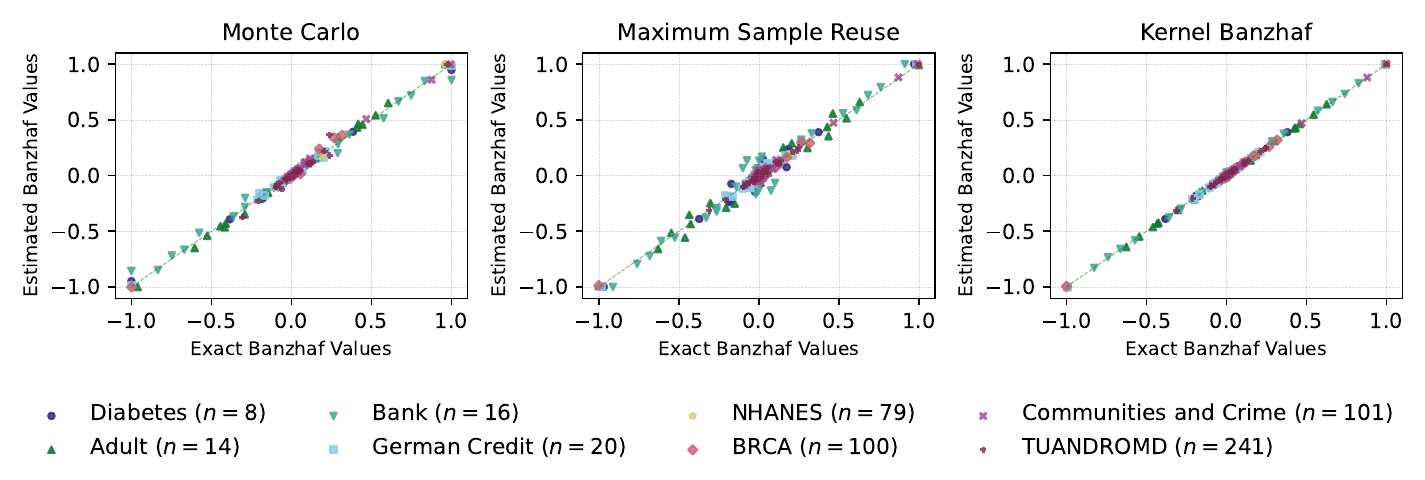}
    \vspace{-.5cm}
    \caption{Comparison of exact and estimated Banzhaf feature attribution values across eight datasets, where every estimator uses $m=20n$ evaluations of the set function $v$. The evaluations dominate the computational cost of each algorithm. Each subplot, labeled with its estimator, shows normalized estimated versus exact Banzhaf values across all features for a randomly selected data point from each dataset. Points closer to the diagonal line indicate more accurate estimates; the plots suggest that Kernel Banzhaf is more accurate than the Monte Carlo and Maximum Sample Reuse estimators.}
    \label{fig:exact-vs-estimate}
    \vspace{-.3cm}
\end{figure*}

%\subsection{Contributions}
Our main contributions can be summarized as follows: 
\begin{enumerate}
    \item We propose Kernel Banzhaf, the first regression-based approximation algorithm for estimating Banzhaf values for general set functions. The foundation of the algorithm is a novel regression formulation, whose exact solution is the Banzhaf values.
    \item We show that Kernel Banzhaf provably returns accurate solutions. Our theoretical guarantees depend on the magnitude of the true Banzhaf values rather than the magnitude of the set function, as for prior estimators.
    \item We extensively evaluate Banzhaf estimator performance across eight popular datasets, in various settings, and under several measures of performance. Our findings demonstrate that Kernel Banzhaf significantly outperforms prior work.
    
    % In order to evaluate accuracy even for large $n$, we focus on the task of feature attribution where the exact Banzhaf values can be computed exactly for tree-based models.
    
\end{enumerate}

\section{Background}
\label{sec:background}

Let $n$ be a positive integer and define $[n] \vcentcolon = \{1,2, \ldots, n\}$.
We let $2^{[n]}$ denote the set of all $2^n$ possible subsets of $[n]$ (including the emptyset).
For set $S$, define the indicator vector $\mathbbm{1}[S] \in \{0,1\}^n$ so that $\mathbbm{1}[S]_i = 1$ if $i \in S$ and $0$ otherwise.
Let $v: 2^{[n]} \to \mathbb{R}$ be a set function.
Shapley and Banzhaf values are as defined in Equations \ref{eq:shapley_def} and \ref{eq:banzhaf_def} in the prior section.
For notational simplicity, we will use $\phi_i = \phi_i^{\textnormal{banz}}$ in the rest of the paper.
Let $\phivec \in \mathbb{R}^{n}$ denote the vector $[\psemi_1, \ldots, \psemi_n]$.

Since we make no structural assumptions about $v$, we consider estimators for computing Banzhaf values that access the function via \textit{samples}; formally, a single sample corresponds to evaluating $v(S)$ for any $S$ the algorithm chooses.

\textbf{Banzhaf Values for Feature Attribution.} As discussed in the previous section, Shapley and Banzhaf values have a number of applications in machine learning, which correspond to different choices of the set function $v$.
For example, we can consider feature selection by defining $v(S)$ as the loss of a model trained only on the features in $S$.
Similarly, we can consider data selection by defining $v(S)$ as the loss of a model trained only on the observations in $S$.

% However, for these set functions and more, it is unclear how to efficiently compute the true Banzhaf values. Instead, we consider a task where the Banzhaf values can efficiently be computed.

One of the most popular applications of Banzhaf values in explainable AI is \emph{feature attribution} (also known as feature importance and feature influence).
Let $\mathcal{M}: \mathbb{R}^n \to \mathbb{R}$ be a trained model which takes input $\mathbf{x} \in \mathbb{R}^n$.
Our goal is to explain the predictions of the model on a particular data point $\mathbf{x}$.
Given a subset of features $S$, define $\mathbf{x}^S$ as the observation where $\mathbf{x}^S_i = \mathbf{x}_i$ if feature $i \in S$ and, otherwise, $\mathbf{x}^{S}_i$ is sampled from the data distribution (possibly conditioned on the features in $S$).
Then $v(S) = \mathbb{E}[\mathcal{M}(\mathbf{x}^S)]$, where the expectation is over the sampling of the features $i \notin S$. This expectation can be estimated via sampling, although for tree-based models, it is actually possible to \emph{exactly compute} the Banzhaf feature attribution values in an efficient way \cite{karczmarz2022improved}. This is valuable in evaluating the performance of Banzhaf value approximation methods since, for these models, we can obtain the true Banzhaf values to compare against.

\section{Kernel Banzhaf}

The starting point of our work is a formulation of Banzhaf values in terms of a structured linear regression problem with an exponential number of rows.
We show in Theorem~\ref{thm:equivalence} that Banzhaf values are the exact solution to this linear regression problem.
Our efficient Kernel Banzhaf algorithm is then obtained by solving the regression problem approximately using just a small subset of rows. 

\subsection{Linear Regression Formulation}\label{sec:lr}

Let $\mathbf{A} \in \mathbb{R}^{2^n \times n}$ be a design matrix and $\mathbf{b} \in \mathbb{R}^{2^n}$ be a target vector.
We will use subsets $S \subseteq [n]$ to index the rows of $\mathbf{A}$ and entries of $\mathbf{b}$. In particular, $[\mathbf{A}]_{S, i}$ denotes the $i^\text{th}$ entry of row $S$. We set:
\begin{align}\label{eq:design_matrix}
    [\mathbf{A}]_{S, i} = \begin{cases}
        + \frac12 & \textnormal{ if $i \in S$} \\
        - \frac12 & \textnormal{ if $i \notin S$}
    \end{cases}
\end{align}
and
\begin{align}\label{eq:target_vector}
  \mathbf{b}_S = v(S).
\end{align}

\begin{restatable}[Linear Regression Equivalence]{theorem}{equivalence}\label{thm:equivalence}
    Consider $\mathbf{A}$ and $\mathbf{b}$ as defined in Equations \ref{eq:design_matrix} and \ref{eq:target_vector}, respectively. Let
    \begin{align*}
        \mathbf{x}^* = \argmin_\mathbf{x}
        \| \mathbf{A x-b} \|_2^2.
    \end{align*}
    Then $\boldsymbol{\phi} = \mathbf{x}^*$, where $\boldsymbol{\phi}$ are the Banzhaf values of $v$.
\end{restatable}

\begin{proof}[Proof of Theorem \ref{thm:equivalence}]
As is standard, we have that:
\begin{align*}
    \argmin_\mathbf{x}
    \| \mathbf{A x-b} \|_2^2
    = (\mathbf{A^\top A})^{-1} \mathbf{A}^\top \mathbf{b}.
\end{align*}
We will analyze the right hand side.
The $(i,j)$ entry in $\mathbf{A^\top A}$ is given by
\begin{align}\label{eq:ATA_ij}
    [\mathbf{A^\top A}]_{i,j} = \sum_{S \subseteq [n]}
     \left(\mathbbm{1}[i \in S]-\frac12\right)\left(\mathbbm{1}[j \in S]-\frac12\right)
\end{align}
If $i \neq j$, there are $2^{n-1}$ terms of $-\frac14$ when $\mathbbm{1}[i \in S]\neq \mathbbm{1}[j \in S]$ and $2^{n-1}$ terms of $+\frac14$ when $\mathbbm{1}[i \in S] = \mathbbm{1}[j \in S]$, hence Equation~\ref{eq:ATA_ij} is 0.
If $i=j$, we have $2^n$ terms of $+\frac14$ hence Equation~\ref{eq:ATA_ij} is $2^{n-2}$.
Together, this gives that 
\begin{align}\label{eq:ata}
 \mathbf{A^\top A} = 2^{n-2} \mathbf{I}.
\end{align}
Then $(\mathbf{A^\top A})^{-1} = \frac1{2^{n-2}} \mathbf{I}$.
Continuing, we have
\begin{align*}
    (\mathbf{A^\top A})^{-1} \mathbf{A^\top b}
    &= \frac1{2^{n-2}} \mathbf{A^\top b}
    \\&= \frac1{2^{n-2}} \sum_{S \subseteq[n]} \left(\mathbbm{1}[S]-\frac12 \mathbf{1}\right) v(S).
\end{align*}
We can write the $i^\text{th}$ entry of the above as:
\begin{align*}
    [(\mathbf{A^\top A})^{-1} \mathbf{A^\top b}]_i
    &= \frac1{2^{n-2}} \sum_{S \subseteq [n]} \left(\mathbbm{1}[i \in S]-\frac12\right) v(S)
    \\&= \frac1{2^{n-1}} \sum_{S \subseteq [n]\setminus \{i\}}
    v(S \cup \{i\}) - v(S)
\end{align*}
which is exactly Equation~\ref{eq:banzhaf_def}.
The statement follows.
\end{proof}

\paragraph{Other Regression Formulations.}

Despite the simplicity of the regression problem above, we are not aware of any prior work that considers it.
As discussed, a regression formulation for Banzhaf values is known for the special case where $v$ is binary and monotone i.e.,  $v:2^{[n]} \to \{0,1\}$ and $v(S) \leq v(T)$ for $S \subseteq T$ \cite{hammer1992approximations}.
Another relevant regression problem has the property that $\boldsymbol{\phi} = \mathbf{x}^* + c\mathbf{1}$ for $\mathbf{1} \in \mathbb{R}^n$ the all-ones vector and $c$ an unknown number that depends on $v$ \cite{ruiz1998family,li2024one}; however, it is not clear how to estimate $c$ efficiently and thus recover the Banzhaf values.

\subsection{The Kernel Banzhaf Algorithm}\label{sec:kernal_banzhaf}

Since $\mathbf{A}$ and $\mathbf{b}$ have $2^n$ rows, constructing the linear regression problem in Theorem \ref{thm:equivalence} to calculate the Banzhaf values is computationally prohibitive. In particular, we must evaluate $v(S)$ for all $2^n$ subsets of $[n]$ to compute $\mathbf{b}$.
Inspired by Kernel SHAP, we avoid this cost by constructing a much smaller regression problem that contains a subsample of $m \ll 2^n$ rows from $\mathbf{A}$ and $\mathbf{b}$.
Let the subsampled design matrix be $\tilde{\mathbf{A}} \in \mathbb{R}^{m \times n}$ and the subsampled target vector be $\tilde{\mathbf{b}} \in \mathbb{R}^m$.
The estimate we produce is 
    \vspace{-.5em}
\begin{align*}
    \approxphivec = \argmin_\mathbf{x} \| \tilde{\mathbf{A}} \mathbf{x} - \tilde{\mathbf{b}} \|_2^2.
\end{align*}
    \vspace{-1.75em}
    
The most direct approach to constructing $\tilde{\mathbf{A}}$ and $\tilde{\mathbf{b}}$ would be to subsample $m$ rows from $\mathbf{A}$ and $\mathbf{b}$ uniformly at random. We do so with a slight modification -- whenever the row corresponding to a set $S$ is sampled, we also select the row corresponding to $[n]\setminus S$. Referred to as ``paired sampling'', this approach has been used in the Kernel SHAP algorithm  for Shapley values \cite{covert2020improving}. While it has no impact on our theoretical results (unpaired sampling would give almost identical bounds) we observe an experimental improvement for Banzhaf value estimation as well. See e.g, \Cref{fig:l2-by-sample-size}, where unpaired sampling is denoted by ``Kernel Banzhaf (excl. Pairs).'' Detailed pseudocode for our Kernel Banzhaf appears in Algorithm~\ref{alg:kernel_banzhaf}.
In Appendix \ref{appendix:swor}, we also explore the possibility of uniform sampling \emph{without replacement}; ultimately, we find that sampling with replacement is simpler and offers similar performance,  until $m \geq 2^n$ when we can exactly compute Banzhaf values anyways.

\begin{algorithm}[tb]
   \caption{Kernel Banzhaf}
   \label{alg:kernel_banzhaf}
\begin{algorithmic}
   \STATE {\bfseries Input:} $n$: positive integer, $v: \{0,1\}^n \to \mathbb{R}$: set function, $m$ : even number of samples such that $n < m \leq 2^n$
   \STATE {\bfseries Output:} $\approxphivec \in \mathbb{R}^n$: estimated Banzhaf values
   \STATE Initialize $\tilde{\mathbf{A}} \gets \mathbf{0}_{m \times n}$
   \FOR{$j \in \{0,2,4, \ldots,m\}$}
      \STATE Sample $S_j$ uniformly from all subsets of $[n]$
      \STATE Set $\tilde{\mathbf{A}}_{j} \gets \mathbbm{1}[S_j] -\frac12 \mathbf{1}$
      \STATE Compute $S_{j+1} = [n] \setminus S$ \hfill // Paired sampling
      \STATE Set $\tilde{\mathbf{A}}_{j+1} \gets \mathbbm{1}[S_{j+1}] -\frac12 \mathbf{1}$
   \ENDFOR
   \STATE Compute $\tilde{\mathbf{b}} \gets [v(S_1), \ldots, v(S_m)]$
   \STATE Solve $\approxphivec \gets \argmin_{\mathbf{x}} \| \tilde{\mathbf{A}} \mathbf{x} - \tilde{\mathbf{b}}\|_2$ 
   \STATE {\bfseries Return:} $\approxphivec$
\end{algorithmic}
\end{algorithm}

\begin{table*}[t]
  \centering
  \resizebox{\linewidth}{!}{%
  \begin{tabular}{l ccc ccc ccc}
    \toprule
    & \multicolumn{3}{c}{MC} & \multicolumn{3}{c}{MSR} & \multicolumn{3}{c}{Kernel Banzhaf}\\
    \cmidrule(lr){2-4} \cmidrule(lr){5-7} \cmidrule(lr){8-10}
    \textbf{Dataset} & 25\% & Median & 75\% & 25\% & Median & 75\% & 25\% & Median & 75\%\\
    \midrule
    Diabetes ($n=8$) & \cellcolor{whitesilver!50}0.0053 & \cellcolor{whitesilver!50}0.0173 & \cellcolor{whitesilver!50}0.0357 & \cellcolor{whitebronze!50}0.0275 & \cellcolor{whitebronze!50}0.0368 & \cellcolor{whitebronze!50}0.0537 & \textbf{\cellcolor{whitegold!50}0.0002}& \textbf{\cellcolor{whitegold!50}0.0006} & \textbf{\cellcolor{whitegold!50}0.0011}\\
    Adult ($n=14$) & \cellcolor{whitesilver!50}0.0075 & \cellcolor{whitesilver!50}0.0116 & \cellcolor{whitesilver!50}0.0263 & \cellcolor{whitebronze!50}0.0387 & \cellcolor{whitebronze!50}0.0482 & \cellcolor{whitebronze!50}0.0633 & \textbf{\cellcolor{whitegold!50}0.0004} & \textbf{\cellcolor{whitegold!50}0.0006} & \textbf{\cellcolor{whitegold!50}0.0010}\\
    Bank ($n=16$) & \cellcolor{whitesilver!50}0.0106 & \cellcolor{whitesilver!50}0.0169 & \cellcolor{whitesilver!50}0.0448 & \cellcolor{whitebronze!50}0.0412 & \cellcolor{whitebronze!50}0.0512 & \cellcolor{whitebronze!50}0.0663 & \textbf{\cellcolor{whitegold!50}0.0009} & \textbf{\cellcolor{whitegold!50}0.0012} & \textbf{\cellcolor{whitegold!50}0.0022}\\
    German Credit ($n=20$) & \cellcolor{whitesilver!50}0.0085 & \cellcolor{whitesilver!50}0.0158 & \cellcolor{whitesilver!50}0.0352 & \cellcolor{whitebronze!50}0.0399 & \cellcolor{whitebronze!50}0.0511 & \cellcolor{whitebronze!50}0.0682 & \textbf{\cellcolor{whitegold!50}0.0006} & \textbf{\cellcolor{whitegold!50}0.0009} & \textbf{\cellcolor{whitegold!50}0.0015}\\
    NHANES ($n=79$) & \cellcolor{whitesilver!50}0.0008 & \cellcolor{whitesilver!50}0.0019 & \cellcolor{whitesilver!50}0.0038 & \cellcolor{whitebronze!50}0.0444 & \cellcolor{whitebronze!50}0.0496 & \cellcolor{whitebronze!50}0.0533 & \textbf{\cellcolor{whitegold!50}0.0000} & \textbf{\cellcolor{whitegold!50}0.0000} & \textbf{\cellcolor{whitegold!50}0.0000}\\
    BRCA ($n=100$) & \cellcolor{whitesilver!50}0.0057 & \cellcolor{whitesilver!50}0.0114 & \cellcolor{whitesilver!50}0.0415 & \cellcolor{whitebronze!50}0.0479 & \cellcolor{whitebronze!50}0.0533 & \cellcolor{whitebronze!50}0.0672 & \textbf{\cellcolor{whitegold!50}0.0002} & \textbf{\cellcolor{whitegold!50}0.0005} & \textbf{\cellcolor{whitegold!50}0.0017}\\
    Communities and Crime ($n=101$) & \cellcolor{whitesilver!50}0.0059 & \cellcolor{whitesilver!50}0.0123 & \cellcolor{whitesilver!50}0.0282 & \cellcolor{whitebronze!50}0.0484 & \cellcolor{whitebronze!50}0.0547 & \cellcolor{whitebronze!50}0.0602 & \textbf{\cellcolor{whitegold!50}0.0005} & \textbf{\cellcolor{whitegold!50}0.0008} & \textbf{\cellcolor{whitegold!50}0.0014}\\
    TUANDROMD ($n=241$) & \cellcolor{whitesilver!50}0.0050 & \cellcolor{whitesilver!50}0.0155 & \cellcolor{whitesilver!50}0.0431 & \cellcolor{whitebronze!50}0.0515 & \cellcolor{whitebronze!50}0.0553 & \cellcolor{whitebronze!50}0.0612 & \textbf{\cellcolor{whitegold!50}0.0002} & \textbf{\cellcolor{whitegold!50}0.0012} & \textbf{\cellcolor{whitegold!50}0.0022}\\
    \bottomrule
  \end{tabular}
  }
  \caption{The relative squared $\ell_2$-norm error (i.e., $\|\approxphivec - \phivec\|_2^2 / \|\phivec\|_2^2$) of Banzhaf estimators with $m=20n$ samples over 50 runs. We bold the lowest error for the 25\% percentile, median, and 75\% percentile.
  %We adopt the Olympic medal notation where the lowest error is in \colorbox{whitegold!50}{gold}, the second lowest is in \colorbox{whitesilver!50}{silver}, and the third is in \colorbox{whitebronze!50}{bronze}.
  Across all eight datasets and every summary statistic, Kernel Banzhaf returns estimates with the lowest error, typically by an order of magnitude.
  }
  \label{tab:mc-msr-kb-20n}
  \vspace{-.3cm}
\end{table*}

\paragraph{Time Complexity.}
Let $T$ be the time complexity of evaluating the set function $v$. The total runtime of our Kernel Banzhaf method is $O(Tm + mn^2)$. In particular, it takes $O(n)$ time to sample a random set and its compliment, for a total of $O(nm)$ cost to construct $\tilde{\mathbf{A}}$. It then takes $O(Tm)$ time to construct $\tilde{\mathbf{b}}$, and finally $O(mn^2)$ time to solve the $m \times n$ least squared problem involving $\tilde{\mathbf{A}}$ and $\tilde{\mathbf{b}}$.
For most applications in ML and explainable AI, we expect the $O(Tm)$ term to dominate.
For example, even a forward pass on a shallow fully connected neural network with $n$ features takes $O(n^2)$ time, so evaluating any attribution-based value function takes at least $O(mn^2)$ time in total.
Figure \ref{fig:time} in Appendix \ref{app:time_complexity} confirms that the cost of Kernel Banzhaf (and prior Monte Carlo methods) is dominated by the number of samples, $m$, i.e., the number of evaluations of $v$.

\subsection{Approximation Guarantees}\label{sec:guarantees}
Despite its relative simplicity, it is possible to prove strong theoretically guarantees on the approximation quality of the Kernel Banzaf method. Such guarantees have recently been show for regression-based Shapley evaluation using results on subsampled regression problems from randomized numerical linear algebra \cite{musco2024leverage,woodruff2014sketching,drineas2018lectures}. In \Cref{app:theory_guarantees}, we use similar tools to first prove that, using a near-linear number of samples, we can solve the Banzhaf value regression formulation from \Cref{thm:equivalence} near optimally:
\begin{restatable}{theorem}{objectiveapprox}\label{thm:objective_approx}
    If $m = O(n \log \frac{n}{\delta} + \frac{n}{\delta \epsilon})$,
    Algorithm~\ref{alg:kernel_banzhaf} produces an estimate $\approxphivec$ that satisfies, with probability $1-\delta$,
    \begin{align}\label{eq:objective_approx}
        \| \mathbf{A} \approxphivec - \mathbf{b} \|_2^2 \leq (1+\epsilon) \| \mathbf{A} \phivec - \mathbf{b} \|_2^2.
    \end{align}
\end{restatable}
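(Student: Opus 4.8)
The plan is to follow the classical ``sketch-and-solve'' analysis for overdetermined least squares \citep{woodruff2014sketching,drineas2018lectures}, adapted to the paired, uniform sampling of Algorithm~\ref{alg:kernel_banzhaf}. Represent the sampling step as a matrix $\mathbf{S} \in \mathbb{R}^{m \times 2^n}$ so that $\tilde{\mathbf{A}} = \mathbf{S}\mathbf{A}$ and $\tilde{\mathbf{b}} = \mathbf{S}\mathbf{b}$ up to a global scaling. Because every leverage score equals $n/2^n$ (Section~\ref{sec:kernal_banzhaf}), every sampled row carries the same reweighting factor $\sqrt{2^n/m}$; since a uniform positive rescaling of all rows leaves $\argmin_\mathbf{x}\|\tilde{\mathbf{A}}\mathbf{x} - \tilde{\mathbf{b}}\|_2$ unchanged, I may assume at no cost the unbiased normalization under which $\mathbb{E}[\mathbf{S}^\top\mathbf{S}]$ is the identity. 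Set $\mathbf{U} = 2^{-(n-2)/2}\mathbf{A}$, which is an orthonormal basis for the column space of $\mathbf{A}$ by Observation~\ref{obs:ATA}, and let $\mathbf{b}^\perp = \mathbf{b} - \mathbf{A}\phivec$ be the optimal residual; by the normal equations underlying Theorem~\ref{thm:equivalence}, $\mathbf{A}^\top\mathbf{b}^\perp = \mathbf{0}$, equivalently $\mathbf{U}^\top\mathbf{b}^\perp = \mathbf{0}$.

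The proof reduces to two structural guarantees on $\mathbf{S}$, after which the conclusion is mechanical. (P1) a \emph{subspace embedding}, $(1-\epsilon_0)\mathbf{I} \preceq (\mathbf{S}\mathbf{U})^\top(\mathbf{S}\mathbf{U}) \preceq (1+\epsilon_0)\mathbf{I}$; and (P2) an \emph{approximate matrix multiplication} bound, $\|\mathbf{U}^\top\mathbf{S}^\top\mathbf{S}\mathbf{b}^\perp\|_2 \le \epsilon_1 \|\mathbf{b}^\perp\|_2$. To combine them, write $\mathbf{A}(\approxphivec - \phivec) = \mathbf{U}\boldsymbol{\beta}$ with $\boldsymbol{\beta} = 2^{(n-2)/2}(\approxphivec - \phivec)$, so that $\|\mathbf{A}(\approxphivec - \phivec)\|_2 = \|\boldsymbol{\beta}\|_2$. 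Optimality of $\approxphivec$ for the sketched problem gives the sketched normal equations $(\mathbf{S}\mathbf{U})^\top(\mathbf{S}\mathbf{U})\boldsymbol{\beta} = (\mathbf{S}\mathbf{U})^\top\mathbf{S}\mathbf{b}^\perp$. Pairing with $\boldsymbol{\beta}$, applying (P1) on the left and Cauchy--Schwarz with (P2) on the right yields $(1-\epsilon_0)\|\boldsymbol{\beta}\|_2^2 \le \epsilon_1\|\mathbf{b}^\perp\|_2\|\boldsymbol{\beta}\|_2$, hence $\|\mathbf{A}(\approxphivec - \phivec)\|_2 \le \tfrac{\epsilon_1}{1-\epsilon_0}\|\mathbf{b}^\perp\|_2$. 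Taking $\epsilon_0$ a fixed constant and $\epsilon_1 = \Theta(\sqrt{\epsilon})$ makes the right-hand side at most $(1+\epsilon)\|\mathbf{A}\phivec - \mathbf{b}\|_2$, which is exactly \eqref{eq:objective_approx}.

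It remains to certify (P1) and (P2) with the claimed number of samples, and here the uniform leverage scores do the work. For (P1), because $\|\mathbf{U}_\mathbf{z}\|_2^2 = n/2^n$ is constant (minimal coherence), a standard matrix Chernoff/Bernstein bound for uniform row sampling of an orthonormal matrix gives $\epsilon_0 \le \tfrac12$ with probability $1-\delta/2$ once $m = O(n\log\tfrac{n}{\delta})$, the first term. For (P2), a direct second-moment computation using $\|\mathbf{U}_\mathbf{z}\|_2^2 = n/2^n$, uniform $p_\mathbf{z} = 2^{-n}$, and $\mathbf{U}^\top\mathbf{b}^\perp = \mathbf{0}$ gives $\mathbb{E}\|\mathbf{U}^\top\mathbf{S}^\top\mathbf{S}\mathbf{b}^\perp\|_2^2 \le \tfrac{n}{m}\|\mathbf{b}^\perp\|_2^2$; Markov's inequality then gives $\|\mathbf{U}^\top\mathbf{S}^\top\mathbf{S}\mathbf{b}^\perp\|_2^2 \le \tfrac{2n}{m\delta}\|\mathbf{b}^\perp\|_2^2$ with probability $1-\delta/2$, so $\epsilon_1^2 = O(\tfrac{n}{m\delta})$ and the requirement $\epsilon_1 = \Theta(\sqrt{\epsilon})$ forces $m = O(\tfrac{n}{\delta\epsilon})$, the second term. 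A union bound over the two events completes the argument. Note that the weak ($1/\delta$ rather than $\log\tfrac1\delta$) dependence in the second term is an inherent feature of the second-moment-plus-Markov route used for (P2).

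The main obstacle is that Algorithm~\ref{alg:kernel_banzhaf} does \emph{not} sample rows independently: it draws each $\mathbf{z}$ together with its complement $\bar{\mathbf{z}}$, and since $[\mathbf{A}]_{\bar{\mathbf{z}}} = -[\mathbf{A}]_{\mathbf{z}}$ the two rows of a pair are perfectly correlated, so the textbook i.i.d. leverage-score analysis does not apply verbatim. I would handle this by regrouping every sum over the $m$ rows into a sum over the $m/2$ \emph{independent} pairs. For (P1) a pair contributes $\tfrac12\mathbf{z}\mathbf{z}^\top$ to the Gram matrix, so $(\mathbf{S}\mathbf{U})^\top(\mathbf{S}\mathbf{U})$ is an average of $m/2$ i.i.d. bounded positive-semidefinite terms and matrix concentration applies with $m/2$ in place of $m$, changing only constants. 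For (P2) a pair contributes $\mathbf{U}_\mathbf{z}(\mathbf{b}^\perp_\mathbf{z} - \mathbf{b}^\perp_{\bar{\mathbf{z}}})$, whose mean is still zero; the crux is to verify that this antithetic coupling does not inflate---and can only reduce---the second moment controlling $\epsilon_1$, so the $\tfrac{n}{m}$ bound survives. Bounding this paired variance directly, rather than invoking an off-the-shelf approximate-matrix-multiplication lemma, is the non-trivial modification and the step I would treat most carefully.
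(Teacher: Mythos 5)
Your proposal is correct and follows essentially the same route as the paper's proof: both reduce the theorem to a spectral subspace-embedding guarantee (proved by matrix Chernoff after regrouping the correlated rows into $m/2$ i.i.d.\ pairs) plus a second-moment-and-Markov bound on $\|\mathbf{U}^\top\mathbf{S}^\top\mathbf{S}\mathbf{b}^\perp\|_2$ (the paper invokes Proposition 2.2 of \cite{wu2018note} for block sampling, which is exactly the paired-variance verification you flag as the crux, and which is likewise the source of the $1/\delta$ rather than $\log\frac1\delta$ dependence). The only difference is cosmetic: you combine the two guarantees by pairing the sketched normal equations with $\boldsymbol{\beta}$ and applying Cauchy--Schwarz, while the paper uses a reverse-triangle-inequality argument; these are interchangeable pieces of the standard sketch-and-solve analysis.
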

Due to a known lower bound on the sample complexity of agnostic active linear regression \cite{price_chen}, Theorem~\ref{thm:objective_approx} can only be improved in the logarithmic factor on $n$ \cite{price_chen}. For constant $\epsilon$ and $\delta$ this is intuitive: in particular, consider a linear set function where $\mathbf{b}$ is in the span of $\mathbf{A}$.
Then the right-hand side is 0 and we must recover $\phivec$ exactly.
To do this, we need to observe at least $n$ linearly independent rows of the regression problem.

As a direct consequence of \Cref{thm:objective_approx}, we obtain a bound on the Euclidean distances between the approximate Banzhaf values computed by our Kernel Banzhaf method and the true values. In particular, in Appendix \ref{app:theory_guarantees}, we show:
\begin{restatable}{corollary}{coronorm}\label{coro:l2norm}
    Let $\gamma = \| \mathbf{A} \boldsymbol{\phi} - \mathbf{b} \|_2^2 / \| \mathbf{A} \phivec \|_2^2$.
    Any $\approxphivec$ that satisfies Equation~\ref{eq:objective_approx} also satisfies
    \begin{align}
    \label{eq:ell_2_bound}
    \| \approxphivec - \phivec \|_2^2
    \leq \epsilon \gamma \| \phivec \|_2^2.
    \end{align}
    The reverse is also true: Equation~\ref{eq:ell_2_bound} implies Equation~\ref{eq:objective_approx}.
\end{restatable}

\begin{figure*}[t]
    \centering
    \includegraphics[width=\linewidth]{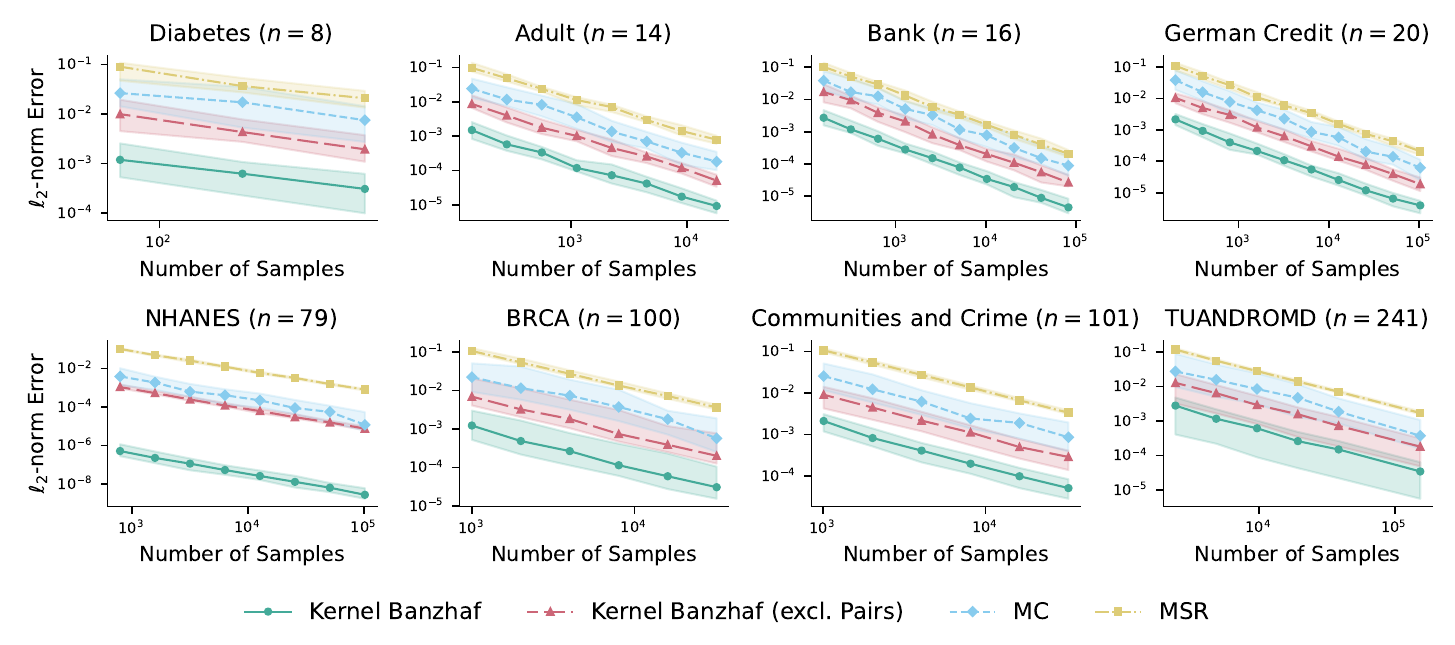}
    \caption{Plots comparing the relative squared $\ell_2$-norm error (i.e., $\|\approxphivec - \phivec\|_2^2 / \|\phivec\|_2^2$) for each Banzhaf estimator across increasing sample sizes in eight datasets. Each point represents the median of 50 runs, with shaded areas indicating the 25th to 75th percentiles.}
    \label{fig:l2-by-sample-size}
    \vspace{-.3cm}
\end{figure*}
Corollary~\ref{coro:l2norm} incurs a dependence on a parameter $\gamma$, which can be viewed as capturing how close $v$ is to an additive set function (i.e., how close $\mathbf{b}$ is to a linear function in the rows of $\mathbf{A}$, which can be viewed as set indicator vectors). Since there is no relaxation in the guarantee from Theorem~\ref{thm:objective_approx} to Corollary~\ref{coro:l2norm} and, as discussed, the sample complexity of Theorem~\ref{thm:objective_approx} is near optimal, we believe a dependence on $\gamma$ is inherent to the Kernal Banzhaf method. Fortunately, we find experimentally that the quantity is typically small -- the median value of $\gamma$ was around $100$ in our experiments on datasets that were small enough to compute the quantity. In an idea case, we could even have $\gamma = 0$ when $\mathbf{b}$ lies in the span of $\mathbf{A}$, and we would recover the exact Banzhaf values. 
%Notice that Theorem~\ref{thm:objective_approx} has no dependence on $\gamma$ while Corollary~\ref{coro:l2norm} does.
%This suggests that, when $\bvec$ is outside the span of $\amat$, Kernel Banzhaf can recover a solution $\approxphivec$ that has near optimal objective value (i.e., $\| \mathbf{A} \boldsymbol{\phi} - \mathbf{A} \boldsymbol{\phi}\|_2^2$ is small) but is far from the optimal solution $\phivec$ (i.e., $\| \approxphivec - \phivec \|_2^2$ is large).
%The reason is that there can be many (equally bad) solutions for the optimal vector that recover similar objective values but are far from each other.
%In the setting when $\bvec$ \textit{is} in the span of $\amat$,
% The benefit of the dependence on $\gamma$ is that we can recover an even stronger guarantee when $\boldsymbol{b}$ is in the span of $\mathbf{A}$. In fact, we recover the optimal solution exactly since $\gamma=0$ in this case.

\paragraph{Comparison of Theoretical Guarantees.}
Basic concentration bounds can be used to prove similar guarantees for the Monte Carlo and Maximum Sample Reuse estimators (see e.g., Theorems 4.8 and 4.9 in \citet{wang2023databanzhaf}).
When MC is run with $O\left(\frac{n^2}{\epsilon} \log (\frac{n}{\delta} )\right)$ samples and when MSR is run with $O\left(\frac{n}{\epsilon} \log (\frac{n}{\delta} )\right)$ samples, they respectively return estimates $\tilde{\boldsymbol{\phi}}$ that satisfy, with high probability,
\begin{align*}
    \| \approxphivec - \phivec \|_2^2
    \leq \epsilon \max_{S \subseteq [n]} v(S)^2.
\end{align*}
% \footnote{Actually, \citet{wang2023databanzhaf} assume that $v: 2^{[n]} \to [0,1]$; however, we could equivalently normalize by $\max_{S} |v(S)|$ and the guarantee as stated follows.}
That is, both prior methods obtain a bound that depends on the maximum of $v$, rather than $\| \boldsymbol{\phi}\|_2^2$.
In general, we expect the maximum magnitude of the set function to be larger than the Banzhaf values, which measure an \textit{average} marginal contribution. This is reflected in the superior Kernel Banzhaf in our experiments (see e.g., Table \ref{tab:mc-msr-kb-20n}).

\subsection{Extension to Probabilistic Values}
Banzhaf and Shapley values can be generalized to so-called \textit{probabilistic values}, which have also found applications in machine learning \cite{kwon22betashapley,li2024one,li2024robust}.
In this section, we show how our regression formulation can be extended to probabilistic values.

For a weight vector $\mathbf{p} \in [0,1]^n$ such that $\sum_{\ell=0}^{n-1} \binom{n-1}{\ell} p_{\ell} = 1$, the \emph{probabilistic value} is given by
\begin{align*}
    \phi^{\textnormal{prob}}_i = \sum_{S \subseteq [n] \setminus \{i\}} p_{|S|}
    [v(S \cup \{i\}) - v(S)].
\end{align*}
Define the quantities $a_n$ and $b_n$ as follows:
\begin{align}\label{eq:an}
    a_n &\vcentcolon= 2 \sum_{\ell=0}^{n-1} \binom{n-1}{\ell} p_{\ell}^2
    - \sum_{\ell=1}^{n-1} \binom{n-2}{\ell-1} \left(p_\ell - p_{\ell-1}\right)^2\\
\label{eq:bn}
    b_n &\vcentcolon= \frac1{a_n} \sum_{\ell=1}^{n-1} \binom{n-2}{\ell-1} \left(p_\ell - p_{\ell-1}\right)^2.
\end{align}
Note that both $a_n$ and $b_n$ depend only on $n$ and $\mathbf{p}$, and can be computed in $O(n)$ time.
In the Banzhaf setting when $p_{\ell} = p_{\ell-1}$ for all $\ell$, observe that $b_n=0$.

Construct a design matrix be $\mathbf{A}^{\textnormal{prob}} \in \mathbb{R}^{2^n \times n}$ with
\begin{align*}
    [\mathbf{A}^{\textnormal{prob}} ]_{S,i} = \begin{cases}
        \frac{p_{|S|-1}}{a_n} & \textrm{if } i \in S \\
        -\frac{p_{|S|}}{a_n} & \textrm{if } i \notin S.
    \end{cases}
\end{align*}
Exactly as we did for Banzhaf values, let $\mathbf{b}^{\textnormal{prob}}_S = v(S)$.

\begin{restatable}[Extended Regression Equivalence]{lemma}{unconstrainedreg}\label{lemma:unconstrained_regression}
Let
\begin{align*}
  \mathbf{x}^*
    =\argmin_{\mathbf{x} \in \mathbb{R}^n} \| \mathbf{A}^{\textnormal{prob}} \mathbf{x} - \mathbf{b}^{\textnormal{prob}} \|_2^2.
  %&= \argmin_{\mathbf{x} \in \mathbb{R}^n} \sum_{S \subseteq [n]} \left(v(S) - \sum_{i \in S} \frac{p_{|S|-1}}{a_n} x_i+ \sum_{i \notin S} \frac{p_{|S|}}{a_n} x_i\right)^2
\end{align*}
Then $\boldsymbol{\phi}^\textnormal{prob} = (\mathbf{I} + b_n \mathbf{1 1^\top}) \mathbf{x}^{*}$.
\end{restatable}

We defer the proof of Lemma \ref{lemma:unconstrained_regression} to Appendix \ref{app:prob_vals}.

Like in the Banzhaf setting, we can sample the full regression problem to produce a subsampled design matrix $\tilde{\mathbf{A}}^\textnormal{prob} \in \mathbb{R}^{m \times n}$ and  target vector $\tilde{\mathbf{b}}^\textnormal{prob} \in \mathbb{R}^{m}$.
Let ${\tilde{\boldsymbol{\phi}}}^\textnormal{prob} = \argmin_{\mathbf{x}} \| \tilde{\mathbf{A}}^\textnormal{prob} \mathbf{x} - \tilde{\mathbf{b}}^\textnormal{prob}\|_2^2$ be the solution to this subsampled problem.
The following theorem (proven in  Appendix \ref{app:prob_vals}) gives guarantees on the accuracy of this solution.

\begin{figure*}
    \centering
    \includegraphics[width=\linewidth]{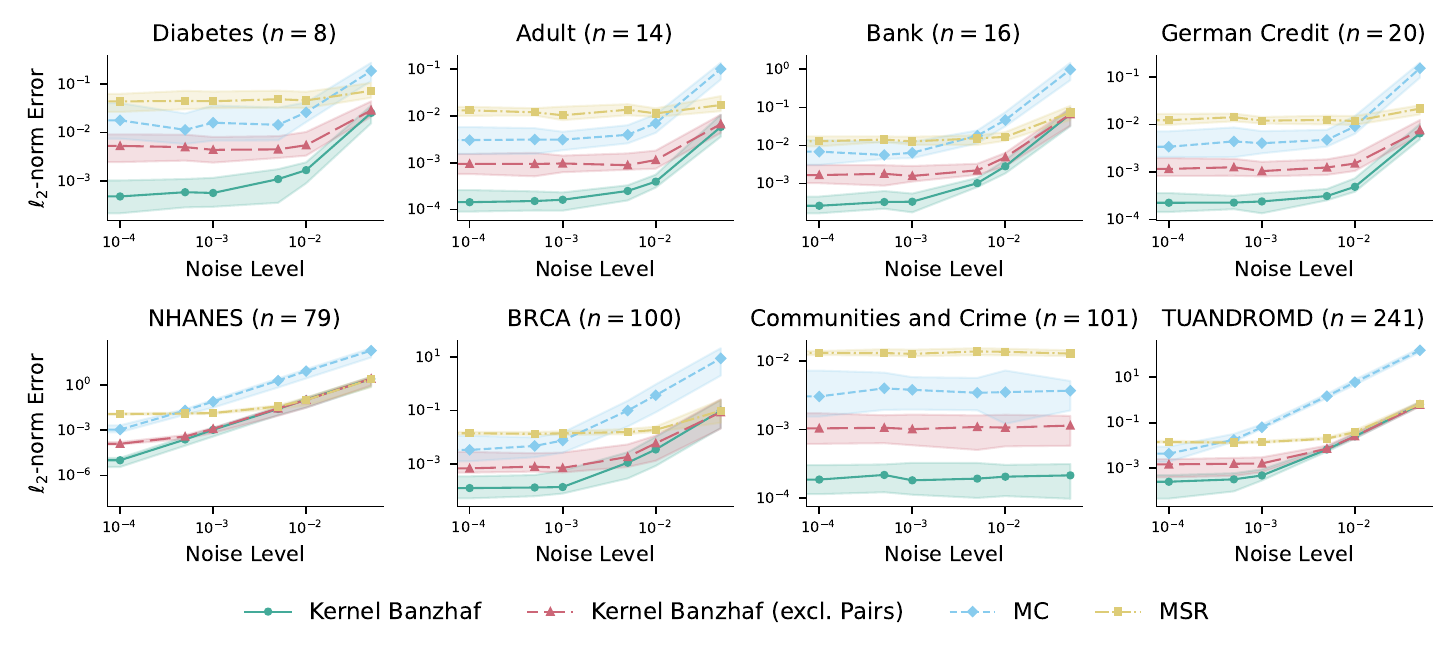}
    \caption{Plots of relative squared $\ell_2$-norm error by noise level across Banzhaf estimators. For each noise level $\sigma$, the estimator observes $v(S) + x$ where $x \sim \mathcal{N}(0,\sigma)$.
    Kernel Banzhaf outperforms for lower noise levels, eventually matching its ablated version and MSR for larger noise. MC is worse for all noise settings, likely because its constituent estimates are $v(S \cup \{i \}) + x - v(S) - x'$, increasing the variance of the noise.
    }
    \label{fig:l2-by-noise}
    \vspace{-.3cm}
\end{figure*}

\begin{restatable}[Probabilistic Value Approximation]{theorem}{probguarantee}\label{thm:prob_guarantee}
    Let $\gamma = \|\mathbf{A}^\textnormal{prob} \mathbf{x}^* - \mathbf{b}^\textnormal{prob}\|_2^2/\| \mathbf{A}^\textnormal{prob} \mathbf{x}^*\|_2^2$. There is a sampling method that takes $m = O(n \log (n/\delta) + n \log(1/\delta) / \epsilon)$ samples and ensures that, with probability $1-\delta$,
    \begin{align*}
        \| \boldsymbol{\phi}^\textnormal{prob} - {\tilde{\boldsymbol{\phi}}}^\textnormal{prob} \|_2^2
        \leq \epsilon \gamma (1+nb_n)^3 \| \boldsymbol{\phi}^\textnormal{prob} \|_2^2.
    \end{align*}
\end{restatable}
Note that Theorem \ref{thm:prob_guarantee} recovers Corollary \ref{coro:l2norm} for Banzhaf values since $b_n=0$.
Initial experiments suggest that the dependence on $b_n$ can limit the accuracy of this regression formualation for other probabilistic values, since we could have $b_n \gg 0$. However, we leave further exploration of the topic to future work. 

\section{Experiments}\label{sec:experiments}
We conduct extensive experimental evaluations of Banzhaf estimators across eight popular datasets.
\footnote{Our code is available at \href{https://github.com/lyrain2001/kernel-banzhaf}{https://github.com/lyrain2001/kernel-banzhaf}}.
In these experiments, we focus on feature attribution tasks for two reasons:
First, we are interested in explainable AI applications where users seek to interpret model predictions in terms of their input features.
Second, we can efficiently compute Banzhaf values even for large $n$, provided the set function corresponds to the feature attribution function of a decision tree model \cite{karczmarz2022improved}. This allows us to obtain a baseline, and thus an accurate calculation of the accuracy of various Banzhaf values estimation methods.
% In contrast, prior work used convergence rate to assess estimator performance \cite{wang2023databanzhaf}; while convergence is an important property, it does not guarantee that the estimates are accurate.

In addition to experiments estimating the Banzhaf values of tree-based models, we also explore neural network models using smaller datasets for which Equation~\ref{eq:banzhaf_def} can be used to calculate exact Banzhaf values. These neural network experiments are presented in Figure \ref{fig:l2-by-sample-size-nn} of Appendix~\ref{app:neural_network}, and align with our findings for decision trees.

We measure approximation error with $\|\approxphivec - \phivec\|_2^2 / \|\phivec\|_2^2$ i.e., the relative squared $\ell_2$-norm error between the estimated and true Banzhaf values. In Appendix~\ref{app:objective_error}, we also evaluate estimators in terms of the objective value $\| \mathbf{A \approxphivec-b} \|_2$ suggested by the linear regression formulation; the results in Figure \ref{fig:objective} corroborate our findings on the $\ell_2$-norm error.

We provide more details on the datasets and implementation details in Appendix~\ref{appendix:dataset}.

%We use XGBoost \citep{chen2016xgboost} for our models in the main experiments, which makes it possible to apply the tree-based algorithm of \citep{karczmarz2022improved} to compute exact Banzhaf values for $n \geq 30$. 

\begin{figure*}[h]
    \centering
    \includegraphics[width=\linewidth]{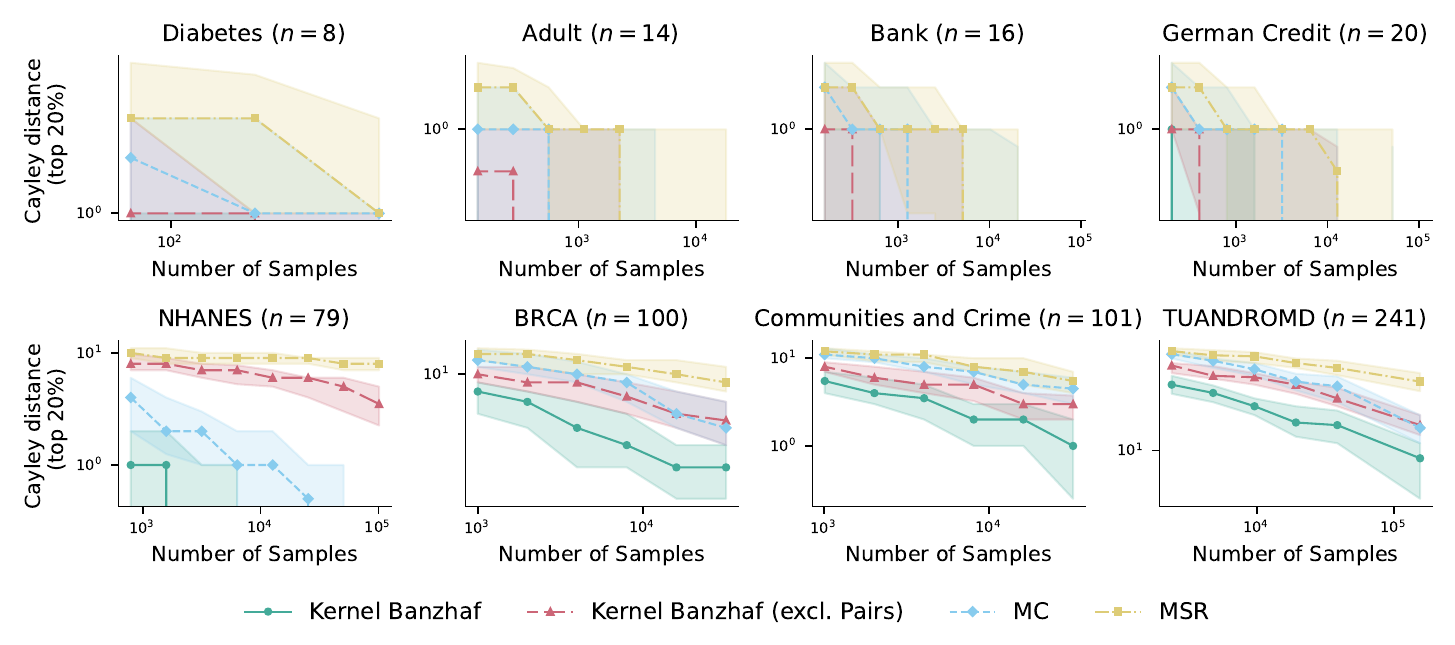}
    \caption{Comparison of top 20\% feature ranking recovery using Cayley distance (lower is more accurate). Recovering the most impactful features is an primary component of comparing and selecting the most important features for machine learnign models. Especially as the sizes of the datasets increase, Kernel Banzhaf gives the best performance.}
    \label{fig:topk_cayley}
    \vspace{-.3cm}
\end{figure*}

\subsection{Baselines} \label{sec:comp_ban_est}
We compare Kernel Banzhaf with the following methods:

\textbf{MC} The Monte Carlo (MC) algorithm estimates each Banzhaf value individually. Let $\mathcal{S}_i$ be the subsets sampled for the estimate of the $i$th Banzhaf value.
We have that $\sum_{i=1}^m |\mathcal{S}_i| = m$.
The MC estimate for player $i$ is given by
$$
    \tilde{\phi}^{\textrm{MC}}_i
    = \frac1{|\mathcal{S}_i|} \sum_{S \in \mathcal{S}_i} [v(S \cup \{i\}) - v(S)].
$$
%The disadvantage of this algorithm is that each sample is used only for a single player.

\textbf{MSR} The Maximum Sample Reuse (MSR) algorithm estimates all Banzhaf values simultaneously \citep{wang2023databanzhaf}.
Let $\mathcal{S}$ be the subsets sampled for all players i.e., $|\mathcal{S}| = m$.
Define $\mathcal{S}_{\ni i}$ as the sampled subsets that contain player $i$ and $\mathcal{S}_{\not\owns i}$ as the sampled subsets that do not contain player $i$.
The MSR estimate for player $i$ is given by
$$
    \tilde{\phi}^{\textrm{MSR}}_i
    = \frac1{|\mathcal{S}_{\ni i}|} \sum_{S \in \mathcal{S}_{\ni i}} v(S) 
    - \frac1{|\mathcal{S}_{\not\owns i}|} \sum_{S \in \mathcal{S}_{\not\owns i}} v(S).
$$
While the MSR algorithm reuses samples, it can have high variance because the magnitude of the set function $v(S)$ is generally much larger than the marginal difference between nearby values 
$v(S \cup \{i\}) - v(S)$.

\textbf{Kernel Banzhaf (Excluding Pairs)} This algorithm is Kernel Banzhaf (Algorithm~\ref{alg:kernel_banzhaf}) but without paired sampling.% \yurong{is "to" Kernel banzhaf?} % Nice catch!

\subsection{Sample Efficiency}
Since evaluating the set function $v$ generally dominates the estimator's time complexity (see confirmation in Appendix \ref{app:time_complexity}), we investigate estimator error by number of samples in Figure~\ref{fig:l2-by-sample-size}.
As expected, the error for all estimators decreases as the sample size increases.
In terms of performance, Kernel Banzhaf gives the lowest error, followed by its ablated variant without paired sampling, and then MC.
In \citet{wang2023databanzhaf}, the MSR algorithm was shown to converge faster than the MC algorithm, but faster convergence does not always imply better accuracy.

\subsection{Robustness to Noise}
The set functions in explainable AI tasks are often estimates of a random process e.g., the expectation of a model's prediction on a randomly sampled observation.
We measure how robust each estimator is to inaccurate set function values by running experiments on the modified set function $v(S) + x$ where $x \sim \mathcal{N}(0,\sigma)$ for a given noise level $\sigma$.
As shown in Figure~\ref{fig:l2-by-noise}, Kernel Banzhaf  and its variant without paired sampling are more robust to this noise than the other algorithms, especially when the level of noise size is low.
This suggests that Kernel Banzhaf is particularly well-suited for real-world applications where the set function is approximated.
In Appendix \ref{app:adversarial_perturbations}, we explore how adversarial noise affects estimator performance, finding that Kernel Banzhaf continues to give the best performance in the adversarial noise setting.

\subsection{Feature Ranking Recovery}
Aside from evaluating the quantitative errors between estimated and exact Banzhaf values, we also evaluate how well each estimator recovers the rank of the most important features.
Such ranking tasks, where we are interested in recovering the rank rather than quantitative value of a feature, is a principal component of comparing and selecting important features for machine learning models.
Figure~\ref{fig:topk_cayley} shows the Cayley distance between the estimated ranks of the top 20\% of features and the true ranks.
Especially for datasets with larger $n$, Kernel Banzhaf gives the most accurate rankings.
We corroborate our findings in terms of the Spearman correlation coefficient (where a higher value is better) in Figure \ref{fig:topk_spearman} of Appendix \ref{app:feature_ranking}.

\section{Conclusion}

In this work, we present Kernel Banzhaf, the first regression-based estimator for Banzhaf values.
Our experiments suggest that Kernel Banzhaf significantly outperforms existing methods for estimating Banzhaf values. %, in terms of sample efficiency, robustness to noise, and feature ranking.
Our theoretical analysis establishes robust approximation guarantees that depend on the size of the Banzhaf values, in contrast to Monte Carlo estimators which depend on the magnitude of the underlying set function.
%We complement our empirical evaluation with strong theoretical approximation guarantees, which depend on the magnitude of the Banzhaf values rather than depending on the magnitude of the set function (as is the case for Monte Carlo estimators).

One avenue for future work is to evaluate the generalization of Kernel Banzhaf on other probabilistic values. However, it is not clear how to efficiently compute general probabilistic values and the poor approximation guarantee when $b_n \neq 0$ may be reflected in the generalized algorithm's performance.

\section*{Impact Statement}
This paper presents a new algorithm for more efficiently computing scores that are widely used to better understand decisions made by black-box machine learning algorithms. While all possible applications of the work are difficult to predict, we thus expect the results to have a net positive impact on the ethical and responsible application of machine learning in society.

\section*{Acknowledgements}
Freire and Liu were partially supported by NSF Awards IIS-2106888 and the DARPA ASKEM and ARPA-H BDF programs. Liu initiated this project during her summer internship at Google Research. Witter was supported by the National Science Foundation under Graduate Research Fellowship Grant No. DGE-2234660. Musco was partially supported by NSF Award CCF-2045590. Opinions, findings, conclusions, or recommendations expressed in this material are those of the authors and do not reflect the views of NSF or DARPA.

\bibliography{references}

\begin{thebibliography}{32}
\providecommand{\natexlab}[1]{#1}
\providecommand{\url}[1]{\texttt{#1}}
\expandafter\ifx\csname urlstyle\endcsname\relax
  \providecommand{\doi}[1]{doi: #1}\else
  \providecommand{\doi}{doi: \begingroup \urlstyle{rm}\Url}\fi

\bibitem[Bache \& Lichman(2013)Bache and Lichman]{bache2013uci}
Bache, K. and Lichman, M.
\newblock Uci machine learning repository, 2013.

\bibitem[Bachrach et~al.(2010)Bachrach, Markakis, Resnick, Procaccia, Rosenschein, and Saberi]{bachrach2010approximating}
Bachrach, Y., Markakis, E., Resnick, E., Procaccia, A.~D., Rosenschein, J.~S., and Saberi, A.
\newblock Approximating power indices: theoretical and empirical analysis.
\newblock \emph{Autonomous Agents and Multi-Agent Systems}, 20:\penalty0 105--122, 2010.

\bibitem[Banzhaf(1965)]{banzhaf1965}
Banzhaf, J.
\newblock Weighted voting doesn't work: {A} mathematical analysis.
\newblock \emph{Rutgers Law Review}, 19\penalty0 (2):\penalty0 317--343, 1965.

\bibitem[Charnes et~al.(1988)Charnes, Golany, Keane, and Rousseau]{charnes1988extremal}
Charnes, A., Golany, B., Keane, M., and Rousseau, J.
\newblock Extremal principle solutions of games in characteristic function form: core, chebychev and shapley value generalizations.
\newblock \emph{Econometrics of planning and efficiency}, pp.\  123--133, 1988.

\bibitem[Chen et~al.(2020)Chen, Janizek, Lundberg, and Lee]{chen2020true}
Chen, H., Janizek, J.~D., Lundberg, S., and Lee, S.-I.
\newblock True to the model or true to the data?
\newblock \emph{arXiv preprint arXiv:2006.16234}, 2020.

\bibitem[Chen \& Guestrin(2016)Chen and Guestrin]{chen2016xgboost}
Chen, T. and Guestrin, C.
\newblock Xgboost: A scalable tree boosting system.
\newblock In \emph{Proceedings of the 22nd acm sigkdd international conference on knowledge discovery and data mining}, pp.\  785--794, 2016.

\bibitem[Chen \& Price(2019)Chen and Price]{price_chen}
Chen, X. and Price, E.
\newblock Active regression via linear-sample sparsification.
\newblock In \emph{Proceedings of the Thirty-Second Conference on Learning Theory}, volume~99 of \emph{Proceedings of Machine Learning Research}, pp.\  663--695. PMLR, 25--28 Jun 2019.

\bibitem[Covert \& Lee(2020)Covert and Lee]{covert2020improving}
Covert, I. and Lee, S.-I.
\newblock Improving kernelshap: Practical shapley value estimation via linear regression.
\newblock \emph{arXiv preprint arXiv:2012.01536}, 2020.

\bibitem[Covert et~al.(2020)Covert, Lundberg, and Lee]{covert2020sage}
Covert, I.~C., Lundberg, S., and Lee, S.-I.
\newblock Understanding global feature contributions with additive importance measures.
\newblock In \emph{Proceedings of the 34th International Conference on Neural Information Processing Systems}, NIPS '20, Red Hook, NY, USA, 2020. Curran Associates Inc.
\newblock ISBN 9781713829546.

\bibitem[Datta et~al.(2015)Datta, Datta, Procaccia, and Zick]{datta2015}
Datta, A., Datta, A., Procaccia, A.~D., and Zick, Y.
\newblock Influence in classification via cooperative game theory.
\newblock In Yang, Q. and Wooldridge, M.~J. (eds.), \emph{IJCAI}, pp.\  511--517. AAAI Press, 2015.
\newblock ISBN 978-1-57735-738-4.
\newblock URL \url{http://dblp.uni-trier.de/db/conf/ijcai/ijcai2015.html#DattaDPZ15}.

\bibitem[Drineas \& Mahoney(2018)Drineas and Mahoney]{drineas2018lectures}
Drineas, P. and Mahoney, M.~W.
\newblock Lectures on randomized numerical linear algebra.
\newblock \emph{The Mathematics of Data}, 25\penalty0 (1), 2018.

\bibitem[Ghorbani \& Zou(2019)Ghorbani and Zou]{ghorbani2019datashapley}
Ghorbani, A. and Zou, J.
\newblock Data shapley: Equitable valuation of data for machine learning.
\newblock In Chaudhuri, K. and Salakhutdinov, R. (eds.), \emph{Proceedings of the 36th International Conference on Machine Learning}, volume~97 of \emph{Proceedings of Machine Learning Research}, pp.\  2242--2251. PMLR, 09--15 Jun 2019.
\newblock URL \url{https://proceedings.mlr.press/v97/ghorbani19c.html}.

\bibitem[Hammer \& Holzman(1992)Hammer and Holzman]{hammer1992approximations}
Hammer, P.~L. and Holzman, R.
\newblock Approximations of pseudo-boolean functions; applications to game theory.
\newblock \emph{Zeitschrift f{\"u}r Operations Research}, 36\penalty0 (1):\penalty0 3--21, 1992.

\bibitem[Karczmarz et~al.(2022)Karczmarz, Michalak, Mukherjee, Sankowski, and Wygocki]{karczmarz2022improved}
Karczmarz, A., Michalak, T., Mukherjee, A., Sankowski, P., and Wygocki, P.
\newblock Improved feature importance computation for tree models based on the banzhaf value.
\newblock In \emph{Uncertainty in Artificial Intelligence}, pp.\  969--979. PMLR, 2022.

\bibitem[Kulynych \& Troncoso(2017)Kulynych and Troncoso]{kulynych2017}
Kulynych, B. and Troncoso, C.
\newblock Feature importance scores and lossless feature pruning using banzhaf power indices, 2017.
\newblock URL \url{https://arxiv.org/abs/1711.04992}.

\bibitem[Kumar et~al.(2020)Kumar, Venkatasubramanian, Scheidegger, and Friedler]{kumar2020problems}
Kumar, I.~E., Venkatasubramanian, S., Scheidegger, C., and Friedler, S.
\newblock Problems with shapley-value-based explanations as feature importance measures.
\newblock In \emph{International conference on machine learning}, pp.\  5491--5500. PMLR, 2020.

\bibitem[Kwon \& Zou(2022)Kwon and Zou]{kwon22betashapley}
Kwon, Y. and Zou, J.
\newblock Beta shapley: a unified and noise-reduced data valuation framework for machine learning.
\newblock In Camps-Valls, G., Ruiz, F. J.~R., and Valera, I. (eds.), \emph{Proceedings of The 25th International Conference on Artificial Intelligence and Statistics}, volume 151 of \emph{Proceedings of Machine Learning Research}, pp.\  8780--8802. PMLR, 28--30 Mar 2022.
\newblock URL \url{https://proceedings.mlr.press/v151/kwon22a.html}.

\bibitem[Lehrer(1988)]{lehrer1988axiomatization}
Lehrer, E.
\newblock An axiomatization of the banzhaf value.
\newblock \emph{International Journal of Game Theory}, 17:\penalty0 89--99, 1988.

\bibitem[Li \& Yu(2024{\natexlab{a}})Li and Yu]{li2024one}
Li, W. and Yu, Y.
\newblock One sample fits all: Approximating all probabilistic values simultaneously and efficiently.
\newblock \emph{Advances in Neural Information Processing Systems}, 2024{\natexlab{a}}.

\bibitem[Li \& Yu(2024{\natexlab{b}})Li and Yu]{li2024robust}
Li, W. and Yu, Y.
\newblock Robust data valuation with weighted banzhaf values.
\newblock \emph{Advances in Neural Information Processing Systems}, 36, 2024{\natexlab{b}}.

\bibitem[Lundberg \& Lee(2017)Lundberg and Lee]{lundberg2017unified}
Lundberg, S.~M. and Lee, S.-I.
\newblock A unified approach to interpreting model predictions.
\newblock \emph{Advances in neural information processing systems}, 30, 2017.

\bibitem[Lundberg et~al.(2020)Lundberg, Erion, Chen, DeGrave, Prutkin, Nair, Katz, Himmelfarb, Bansal, and Lee]{lundberg2020local}
Lundberg, S.~M., Erion, G., Chen, H., DeGrave, A., Prutkin, J.~M., Nair, B., Katz, R., Himmelfarb, J., Bansal, N., and Lee, S.-I.
\newblock From local explanations to global understanding with explainable ai for trees.
\newblock \emph{Nature machine intelligence}, 2\penalty0 (1):\penalty0 56--67, 2020.

\bibitem[Merrill~III(1982)]{merrill1982approximations}
Merrill~III, S.
\newblock Approximations to the banzhaf index of voting power.
\newblock \emph{The American Mathematical Monthly}, 89\penalty0 (2):\penalty0 108--110, 1982.

\bibitem[Moro et~al.(2014)Moro, Cortez, and Rita]{moro2014data}
Moro, S., Cortez, P., and Rita, P.
\newblock A data-driven approach to predict the success of bank telemarketing.
\newblock \emph{Decision Support Systems}, 62:\penalty0 22--31, 2014.

\bibitem[Musco \& Witter(2024)Musco and Witter]{musco2024leverage}
Musco, C. and Witter, R.~T.
\newblock Provably accurate shapley value estimation via leverage score sampling, 2024.
\newblock URL \url{https://arxiv.org/abs/2410.01917}.

\bibitem[Patel et~al.(2021)Patel, Strobel, and Zick]{patel2021high}
Patel, N., Strobel, M., and Zick, Y.
\newblock High dimensional model explanations: An axiomatic approach.
\newblock In \emph{Proceedings of the 2021 ACM Conference on Fairness, Accountability, and Transparency}, pp.\  401--411, 2021.

\bibitem[Ruiz et~al.(1998)Ruiz, Valenciano, and Zarzuelo]{ruiz1998family}
Ruiz, L.~M., Valenciano, F., and Zarzuelo, J.~M.
\newblock The family of least square values for transferable utility games.
\newblock \emph{Games and Economic Behavior}, 24\penalty0 (1-2):\penalty0 109--130, 1998.

\bibitem[Shapley(1953)]{shapley:book1953}
Shapley, L.~S.
\newblock A value for n-person games.
\newblock In Kuhn, H.~W. and Tucker, A.~W. (eds.), \emph{Contributions to the Theory of Games II}, pp.\  307--317. Princeton University Press, Princeton, 1953.

\bibitem[Sliwinski et~al.(2018)Sliwinski, Strobel, and Zick]{sliwinski2018}
Sliwinski, J., Strobel, M., and Zick, Y.
\newblock {Axiomatic Characterization of Data-Driven Influence Measures for Classification}.
\newblock In \emph{{5th Workshop on Fairness, Accountability, and Transparency in Machine Learning (FATML), Stockholm, Sweden}}, July 2018.

\bibitem[Wang \& Jia(2023)Wang and Jia]{wang2023databanzhaf}
Wang, J.~T. and Jia, R.
\newblock Data banzhaf: A robust data valuation framework for machine learning.
\newblock In Ruiz, F., Dy, J., and van~de Meent, J.-W. (eds.), \emph{Proceedings of The 26th International Conference on Artificial Intelligence and Statistics}, volume 206 of \emph{Proceedings of Machine Learning Research}, pp.\  6388--6421. PMLR, 25--27 Apr 2023.
\newblock URL \url{https://proceedings.mlr.press/v206/wang23e.html}.

\bibitem[Woodruff et~al.(2014)]{woodruff2014sketching}
Woodruff, D.~P. et~al.
\newblock Sketching as a tool for numerical linear algebra.
\newblock \emph{Foundations and Trends{\textregistered} in Theoretical Computer Science}, 10\penalty0 (1--2):\penalty0 1--157, 2014.

\bibitem[Wu(2018)]{wu2018note}
Wu, Y.
\newblock A note on random sampling for matrix multiplication.
\newblock \emph{arXiv preprint arXiv:1811.11237}, 2018.

\end{thebibliography}
\bibliographystyle{icml2025}

%%%%%%%%%%%%%%%%%%%%%%%%%%%%%%%%%%%%%%%%%%%%%%%%%%%%%%%%%%%%%%%%%%%%%%%%%%%%%%%
%%%%%%%%%%%%%%%%%%%%%%%%%%%%%%%%%%%%%%%%%%%%%%%%%%%%%%%%%%%%%%%%%%%%%%%%%%%%%%%
% APPENDIX
%%%%%%%%%%%%%%%%%%%%%%%%%%%%%%%%%%%%%%%%%%%%%%%%%%%%%%%%%%%%%%%%%%%%%%%%%%%%%%%
%%%%%%%%%%%%%%%%%%%%%%%%%%%%%%%%%%%%%%%%%%%%%%%%%%%%%%%%%%%%%%%%%%%%%%%%%%%%%%%
\newpage
\appendix
\onecolumn

\clearpage
\section{Kernel Banzhaf Approximation Guarantees}\label{app:theory_guarantees}

In this section, we prove theoretical guarantees on the performance of Kernel Banzhaf.
We begin with a standard regression sampling result, the proof of which we defer to the end of this section.

\begin{theorem}\label{thm:full_leverage_score}
    Consider a subsampling matrix $\mathbf{S} \in \mathbb{R}^{m \times 2^n}$ where each row contains exactly one non-zero entry, selected (possibly in a paired way) with probability proportional to the leverage scores of the corresponding row in $\mathbf{A}$.
    Define $\tilde{\mathbf{A}} = \mathbf{SA}$ and $\tilde{\mathbf{b}} = \mathbf{Sb}$.
    Let the solution to the subsampled problem be $\tilde{\boldsymbol{\phi}} = \argmin_\mathbf{x} \| \tilde{\mathbf{A}} \mathbf{x} - \tilde{\mathbf{b}} \|_2^2$.
    If $m = O(n \log (\frac{n}{\delta}) + \frac{n}{\delta \epsilon})$, then 
    \begin{align}
        \| \mathbf{A} \tilde{\boldsymbol{\phi}} - \mathbf{b} \|_2^2 \leq (1+\epsilon) \| \mathbf{A} \boldsymbol{\phi} - \mathbf{b}\|_2^2
    \end{align}
    with probability $1-\delta$.
\end{theorem}

In order to prove Theorem~\ref{thm:objective_approx}, we will show that uniform sampling is actually equivalent to leverage score sampling. The leverage scores of a matrix are an important statistical quantity that roughly measures the ``uniqueness'' of a row \citep{woodruff2014sketching,drineas2018lectures}.

The leverage scores of $\mathbf{A}$ are given by
\begin{align}
    \ell_S = [\mathbf{A}]_S^\top
    \left( \mathbf{A^\top A} \right)^{-1} [\mathbf{A}]_S
    = \frac1{2^{n-2}}
    \left(\mathbbm{1}[S] - \frac12 \right)^\top \left(\mathbbm{1}[S] - \frac12 \right)
    = \frac{n}{2^n}
\end{align}
where the second equality follows by Equation~\ref{eq:ata}.
Notice the leverage scores are equivalent to uniform sampling and Theorem \ref{thm:objective_approx} immediately follows.

We can now prove Corollary~\ref{coro:l2norm}.

\coronorm*

\begin{proof}[Proof of Corollary~\ref{coro:l2norm}]

We have
\begin{align}
    \| \mathbf{A} \approxphivec - \mathbf{b} \|_2^2
    = \| \mathbf{A} \approxphivec - \mathbf{A} \boldsymbol{\phi} + \mathbf{A} \boldsymbol{\phi} - \mathbf{b} \|_2^2    
    = \| \mathbf{A} \approxphivec - \mathbf{A} \boldsymbol{\phi} \|_2^2 + \| \mathbf{A} \boldsymbol{\phi} - \mathbf{b} \|_2^2
\end{align}
where the second equality follows because $\mathbf{A} \boldsymbol{\phi} - \mathbf{b}$ is orthogonal to any vector in the span of $\mathbf{A}$.
%To see why this orthogonality holds, consider that $\boldsymbol{\phi}$ minimizes $\| \mathbf{A} \boldsymbol{\phi} - \mathbf{b} \|_2^2$, implying $\mathbf{A}^T (\mathbf{A} \boldsymbol{\phi} - \mathbf{b}) = \mathbf{0}$.
%From the zero gradient condition,
%\begin{align}
%(\mathbf{A} \approxphivec - \mathbf{A} \boldsymbol{\phi})^T (\mathbf{A} \boldsymbol{\phi} - \mathbf{b}) = (\approxphivec - \boldsymbol{\phi})^T \mathbf{A}^T (\mathbf{A} \boldsymbol{\phi} - \mathbf{b})
%= (\approxphivec - \boldsymbol{\phi})^T \mathbf{0} 
%= 0,
%\end{align}
%which confirms that $\mathbf{A} \boldsymbol{\phi} - \mathbf{b}$ is orthogonal to $\mathbf{A} \approxphivec - \mathbf{A} \boldsymbol{\phi}$.
Then, by the assumption that $\| \mathbf{A} \tilde{\boldsymbol{\phi}} - \mathbf{b} \|_2^2 \leq (1+\epsilon) \| \mathbf{A} \boldsymbol{\phi}- \mathbf{b}\|_2^2$, we have $\| \mathbf{A} \approxphivec - \mathbf{A} \boldsymbol{\phi} \|_2^2 \leq \epsilon \| \mathbf{A} \boldsymbol{\phi} - \mathbf{b} \|_2^2$.
%\begin{align}\label{eq:to_prove}
%    \| \mathbf{A} \approxphivec - \mathbf{A} \boldsymbol{\phi} \|_2^2 \leq \epsilon \| \mathbf{A} \boldsymbol{\phi} - \mathbf{b} \|_2^2.
%\end{align}
By Equation~\ref{eq:ata}, we have 
\begin{align}
  \|\mathbf{A} \boldsymbol{\phi} \|_2^2 = \boldsymbol{\phi}^\top \mathbf{A^\top A} \boldsymbol{\phi} = 2^{n-2} \| \boldsymbol{\phi} \|_2^2  
\end{align}
and, similarly, $\|\mathbf{A} (\approxphivec - \boldsymbol{\phi}) \ \|_2^2 = 2^{n-2}\|\approxphivec - \boldsymbol{\phi} \ \|_2^2$.
Then, with the definition of $\gamma$,
\begin{align}
    2^{n-2} \| \approxphivec - \boldsymbol{\phi} \|_2^2 =
    \| \mathbf{A}( \approxphivec - \boldsymbol{\phi}) \|_2^2 
    \leq \epsilon \| \mathbf{A} \boldsymbol{\phi} - \mathbf{b} \|_2^2
    = \epsilon \gamma \| \mathbf{A} \boldsymbol{\phi} \|_2^2
    = 2^{n-2} \epsilon \gamma \| \boldsymbol{\phi} \|_2^2.
\end{align}
The statement then follows after dividing both sides by $2^{n-2}$.
\end{proof}

Next, we prove Theorem~\ref{thm:full_leverage_score}, which is a standard guarantee for leverage score sampling.
However, because rows are sampled in \textit{pairs}, we need to modify the analysis.
In particular, both the spectral guarantee that the sampling matrix preserves eigenvalues and the Frobenius guarantee that the sampling matrix preserves Frobenius norm need to be reproved.

We will adopt the notation from \citet{wu2018note}.
Let's consider a leverage score sampling method where rows are selected in blocks.
Define $\Theta$ as a partition of blocks, each containing 2 elements with identical leverage scores in our setting. 
We assign a sampling probability $p^+_i$ to each block $\Theta_i$, calculated as the sum of leverage scores in that block divided by the total sum of all leverage scores:
$p^+i \vcentcolon= \frac{\sum{k \in \Theta_i} \ell_k}{\sum_{j} \ell_j}$.
For simplicity of notation, suppose $m$ is even.
Let $\mathbf{S} \in \mathbb{R}^{m \times \rho}$ be a sampling matrix, initially set to $\mathbf{0}_{m \times n}$.
The sampling process repeats $m/2$ times:
Sample a block $\Theta_{i}$ with probability $p^+_{i}$.
For each $k \in \Theta_i$, set the $k$th entry in an empty row to $\frac{1}{\sqrt{p^+_i}}$.

To analyze the solution obtained from this block-wise leverage score sampling, we will demonstrate that the sampling matrix $\mathbf{S}$ preserves both the spectral norm and the Frobenius norm.

\begin{lemma}[Spectral Approximation]\label{lemma:spectral}
    Let $\mathbf{U} \in \mathbb{R}^{\rho \times n}$ be a matrix with orthonormal columns.
    Consider the block random sampling matrix $\mathbf{S}$ described above with rows sampled according to the leverage scores of $\mathbf{U}$.
    When $m = \Omega(n \log (n/\delta)/\epsilon^2)$,
    \begin{align}
        \| \mathbf{I} - \mathbf{U^\top S^\top S U} \|_2 \leq \epsilon
    \end{align}
    with probability $1-\delta$.
\end{lemma}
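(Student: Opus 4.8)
The plan is to prove the lemma via a \textbf{matrix Chernoff} argument applied to a sum of independent, positive semidefinite random matrices, one per block draw. Write $\mathbf{u}_k^\top$ for the $k$th row of $\mathbf{U}$, and recall that $\|\mathbf{u}_k\|_2^2 = \ell_k$ is exactly the leverage score of that row. The crucial step for handling \emph{paired} sampling is to not treat the two rows of a block as separate rank-one terms (they are perfectly correlated, since they are always drawn together) and to instead bundle them into a single rank-two matrix. Concretely, after folding in the $1/\sqrt{m/2}$ row normalization, the $t$th of the $m/2$ independent block draws contributes
\begin{align}
\mathbf{X}_t \vcentcolon= \frac{1}{(m/2)\,p^+_{i_t}}\sum_{k \in \Theta_{i_t}} \mathbf{u}_k \mathbf{u}_k^\top,
\end{align}
so that $\mathbf{U}^\top \mathbf{S}^\top \mathbf{S} \mathbf{U} = \sum_{t=1}^{m/2} \mathbf{X}_t$ is a sum of i.i.d.\ PSD matrices.

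First I would verify the mean is correct. Since block $\Theta_i$ is drawn with probability $p^+_i = (\sum_{k\in\Theta_i}\ell_k)/\sum_j \ell_j$ and $\sum_j \ell_j = \Tr(\mathbf{U}^\top\mathbf{U}) = n$, taking expectations collapses the $1/p^+_i$ weights and yields
\begin{align}
\mathbb{E}[\mathbf{X}_t] = \frac{2}{m}\sum_{i}\sum_{k\in\Theta_i}\mathbf{u}_k\mathbf{u}_k^\top = \frac{2}{m}\,\mathbf{U}^\top\mathbf{U} = \frac{2}{m}\,\mathbf{I},
\end{align}
so $\mathbb{E}[\mathbf{U}^\top\mathbf{S}^\top\mathbf{S}\mathbf{U}] = \mathbf{I}$, confirming $\mu_{\min}=\mu_{\max}=1$. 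Next I would bound the per-term spectral norm $R \geq \|\mathbf{X}_t\|_2$, the quantity that controls the Chernoff rate. Using $\|\mathbf{u}_k\mathbf{u}_k^\top\|_2 = \ell_k$ and $p_i^+ = (\sum_{k\in\Theta_i}\ell_k)/n$, the leverage scores in the numerator and denominator cancel:
\begin{align}
\|\mathbf{X}_t\|_2 \leq \frac{\sum_{k\in\Theta_{i_t}}\ell_k}{(m/2)\,p^+_{i_t}} = \frac{n}{m/2} = \frac{2n}{m},
\end{align}
which holds for every block regardless of its size-two composition. This is the place where the pairing matters: the bundling does not inflate $R$ beyond the unpaired case (up to the constant $2$), precisely because the block probability grows in proportion to the summed leverage scores.

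Finally, I would invoke the matrix Chernoff inequality for sums of independent PSD matrices (e.g.\ Tropp): with $\mu_{\max}=\mu_{\min}=1$ and uniform bound $R = 2n/m$, a union bound over the two tails gives
\begin{align}
\Pr\big[\|\mathbf{I} - \mathbf{U}^\top\mathbf{S}^\top\mathbf{S}\mathbf{U}\|_2 \geq \epsilon\big] \leq 2n\exp\!\left(-\frac{\epsilon^2}{3R}\right) = 2n\exp\!\left(-\frac{\epsilon^2 m}{6n}\right).
\end{align}
Setting the right-hand side to $\delta$ and solving gives $m = \Omega(n\log(n/\delta)/\epsilon^2)$, as claimed. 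I expect the \textbf{main obstacle} to be the careful setup of the block decomposition and the verification that the standard matrix Chernoff hypotheses (independence across draws, the correct identity mean, and a valid uniform norm bound) all survive the within-block correlation; once the rank-two bundling is in place, the remaining computation is routine. A secondary subtlety is the bookkeeping of the row-normalization constant so that the mean lands exactly on $\mathbf{I}$ rather than on a scalar multiple of it, which is what makes the $\mu_{\min}=\mu_{\max}=1$ normalization of the Chernoff bound applicable.
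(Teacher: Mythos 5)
Your proof is correct, and it shares the paper's one essential structural idea for handling paired sampling: bundle the two perfectly correlated rows of each block into a single rank-two random matrix, so that the $m/2$ block draws are independent and a matrix concentration inequality applies. Where you diverge is in which inequality you invoke, and this changes the technical work substantially. The paper centers its per-draw matrices, taking $\mathbf{X}_j = \mathbf{I} - \frac{1}{p^+_{i(j)}}\sum_{k\in\Theta_{i(j)}}\mathbf{U}_k^\top\mathbf{U}_k$, and applies a Bernstein-form matrix Chernoff bound that requires, in addition to the uniform norm bound, a second-moment bound $\|\mathbb{E}[\mathbf{X}_j^2]\|_2 \leq \sigma^2$; establishing $\sigma^2 \leq 4n$ is the bulk of the paper's proof and is done via the Gershgorin circle theorem together with Loewner-order manipulations of $\mathbf{U}_{\Theta_i}^\top\mathbf{U}_{\Theta_i}\mathbf{U}_{\Theta_i}^\top\mathbf{U}_{\Theta_i}$. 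You instead keep the per-draw terms uncentered and PSD, verify that the mean of the sum is exactly $\mathbf{I}$, and apply Tropp's Chernoff bound for sums of independent PSD matrices, which needs only the uniform bound $R = 2n/m$ --- obtained from the same leverage-score cancellation $\bigl(\sum_{k\in\Theta_i}\ell_k\bigr)/p^+_i = n$ that gives the paper its bound $\gamma = 1+n$. The payoff of your route is that the variance computation and the Gershgorin argument disappear entirely, and since the paper's $\sigma^2 = O(n)$ is of the same order as its $\gamma$, nothing is lost: both routes land on $m = O(n\log(n/\delta)/\epsilon^2)$. One point in your favor on bookkeeping: your identity $\mathbf{U}^\top\mathbf{S}^\top\mathbf{S}\mathbf{U} = \sum_t \mathbf{X}_t$ correctly folds the $1/\sqrt{m/2}$ normalization into the sampling matrix, whereas the paper's description of $\mathbf{S}$ omits this factor and its Chernoff application sums over $m$ indices that are really $m/2$ block draws; these are harmless constant-factor slips in the paper, but your accounting is the cleaner of the two.
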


\begin{proof}[Proof of Lemma~\ref{lemma:spectral}]
We will use the following matrix Chernoff bound (see e.g., Fact 1 in \citet{woodruff2014sketching}).

\begin{fact}[Matrix Chernoff]\label{fact:matrix_chernoff}
Let $\mathbf{X}_1, \ldots, \mathbf{X}_m \in \mathbb{R}^{n \times n}$ be independent samples of symmetric random matrices with $\mathbb{E}[\mathbf{X}_j] = \mathbf{0}$, $\| \mathbf{X}_j \|_2 \leq \gamma$ for all $j$, and $\| \mathbb{E}_j[\mathbf{X}_j^2] \|_2 \leq \sigma^2$.
Then for any $\epsilon > 0$,
\begin{align}
    \Pr\left(\left\| \frac1{m} \sum_{j=1}^m \mathbf{X}_j \right\|_2 \geq \epsilon\right) \leq 2 n \exp\left(\frac{-m \epsilon^2}{\sigma^2 + \gamma \epsilon/3}\right).
\end{align}
\end{fact}

For sample $j \in [m]$, let $i(j)$ be the index of the block selected.
Define 
\begin{align}
    \mathbf{X}_j = \mathbf{I} 
    - \frac1{p^+_{i(j)}} \sum_{k \in \Theta_{i(j)}} \mathbf{U}_k ^\top \mathbf{U}_k
\end{align}
We will compute $\mathbf{E}[X_j]$, $\| \mathbf{X}_j \|_2$, and $\| \mathbb{E}[\mathbf{X}_j^2] \|_2$.
First,
\begin{align}
    \E[\mathbf{X}_j]
    = \mathbf{I} 
    -  \sum_{i =1}^{| \Theta|} p^+_{i} \frac1{p^+_{i}}
    \sum_{k \in \Theta_{i}} \mathbf{U}_k ^\top \mathbf{U}_k
    = \mathbf{0}
\end{align}
where the last equality follows because $\Theta$ is a partition and $\mathbf{U}^\top \mathbf{U} = \mathbf{I}$.
%\begin{align} 
%    = \mathbf{I} - \sum_{i}
%    (p_i + p_{i'}) \frac{\mathbf{U}_i \mathbf{U}_i^\top + \mathbf{U}_i \mathbf{U}_{i'}^\top}{p_i + p_i'} = \mathbf{0}.
%\end{align}
Next, note that
\begin{align}
    \| \mathbf{X}_j \|_2 \leq \|\mathbf{I} \|_2 +
    \frac{\sum_{k \in \Theta_{i(j)}} \|\mathbf{U}_k^\top \mathbf{U}_k\|_2}
    {\sum_{k \in \Theta_{i(j)}} p_k}
    = 1 + n \frac{\sum_{k \in \Theta_{i(j)}} \|\mathbf{U}_k \|_2^2}
    {\sum_{k \in \Theta_{i(j)}} \ell_k} = 1 + n
\end{align}
where the last equality follows because $\| \mathbf{U}_k \|_2^2 = \ell_k$ since $\mathbf{U}^\top \mathbf{U} = \mathbf{I}$.
%\begin{align}
%    \| \mathbf{X}_j \|_2 \leq \|\mathbf{I} \|_2 +
%    \frac{\| \mathbf{U}_{i(j)} \mathbf{U}_{i(j)}^\top + \mathbf{U}_{i'(j)} \mathbf{U}_{i'(j)}^\top \|_2}{p_{i(j)} + p_{i'(j)}}
%    \leq 1 + n\frac{\| \mathbf{U}_{i(j)} \|_2^2 + \| \mathbf{U}_{i'(j)} \|_2^2}
%    {\ell_{i(j)} + \ell_{i'(j)}}
%    = 1 + n
%\end{align}
Define $\mathbf{U}_{\Theta_i} \in \mathbb{R}^{|\Theta_i| \times n}$ as the matrix with rows $\mathbf{U}_k$ for $k \in \Theta_i$. 
Observe that $\sum_{k \in \Theta_i} \mathbf{U}_k^\top \mathbf{U}_k = \mathbf{U}_{\Theta_i}^\top\mathbf{U}_{\Theta_i}$.
%Define $\mathbf{U}_{i,i'} \in \mathbb{R}^{n \times 2}$ as the matrix with columns $\mathbf{U}_{i}$ and $\mathbf{U}_{i'}$.
Finally, note that
\begin{align}
    \E[\mathbf{X}_j^2]
    &= \mathbf{I} - 2 \sum_{i=1}^{|\Theta|} p^+_i
    \frac{\sum_{k \in \Theta_i} \mathbf{U}_k^\top \mathbf{U}_k}
    {p^+_i}
    + \sum_{i=1}^{|\Theta|}
    p^+_i \frac{\left(\mathbf{U}_{\Theta_i}^\top\mathbf{U}_{\Theta_i}\right)^2}{{p^+_{i}}^2}
    \\&= -\mathbf{I} + 
    \sum_{i=1}^{|\Theta|} \frac1{p^+_{i}} \mathbf{U}_{\Theta_i}^\top\mathbf{U}_{\Theta_i}\mathbf{U}_{\Theta_i}^\top\mathbf{U}_{\Theta_i}.
    \label{eq:Exj2}
\end{align}
Observe that entry $(k, k')$ of $\mathbf{U}_{\Theta_i}\mathbf{U}_{\Theta_i}^\top \in \mathbb{R}^{|\Theta_i| \times |\Theta_i|}$ is $\mathbf{U}_k \mathbf{U}_{k'}^\top$.
So the absolute value of each entry is $|\mathbf{U}_k \mathbf{U}_{k'}^\top| \leq \| \mathbf{U}_k \|_2 \| \mathbf{U}_{k'} \|_2 = \ell_k^{1/2} \ell_{k'}^{1/2}$ by Cauchy-Schwarz.
Define $\ell^{\max}_i = \max_{k \in \Theta_i} \ell_k$ and $\ell^{\min}_i = \min_{k \in \Theta_i} \ell_k$.
By the Gershgorin circle theorem, $\mathbf{U}_{\Theta_i}\mathbf{U}_{\Theta_i}^\top \preceq 2\ell^{\max}_i |\Theta_i|^2 \mathbf{I}$.
Equivalently, $\mathbf{A} \preceq \mathbf{B}$, $\mathbf{x}^\top \mathbf{A} \mathbf{x} \leq \mathbf{x}^\top \mathbf{B} \mathbf{x}$ for all $\mathbf{x}$.
Consider an arbitrary $\mathbf{z}$.
We have $\mathbf{z^\top C^\top A C z} \leq \mathbf{z^\top C^\top B C z}$ since $\mathbf{Cz}$ is some $\mathbf{x}$.
It follows that $\mathbf{U}_{\Theta_i}^\top\mathbf{U}_{\Theta_i}\mathbf{U}_{\Theta_i}^\top\mathbf{U}_{\Theta_i} \preceq 2\ell^{\max}_i |\Theta_i|^2 \mathbf{U}_{\Theta_i}^\top\mathbf{U}_{\Theta_i}$.
Then
\begin{align}
    (\ref{eq:Exj2})
    \preceq n \sum_{i=1}^{|\Theta_i|}
    \frac{2\ell^{\max}_i |\Theta_i|^2 \mathbf{U}_{\Theta_i}^\top\mathbf{U}_{\Theta_i}}
    {\sum_{k\in \Theta_i} \ell_k}
    \preceq n \max_{i} 2 |\Theta_i| \frac{\ell^{\max}_i}{\ell^{\min}_i} \mathbf{I}.
\end{align}
Since $|\Theta_i| \leq 2$ and the leverage scores in a block are all equal, $\| \E[\mathbf{X}_j^2] \|_2 \leq 4n$.

Applying Fact~\ref{fact:matrix_chernoff} with $m = O(n \log(n/\delta)/\epsilon^2)$
yields
\begin{align}
    \Pr \left( \left\| \frac1{m} \sum_{j=1}^m \mathbf{I} 
    - \frac1{p^+_{i(j)}} \mathbf{U}_{\Theta_i}^\top \mathbf{U}_{\Theta_i} \right \|_2 \geq \epsilon \right) \leq \delta.
\end{align}
The lemma statement follows.

%\begin{align}
%    \| \E[\mathbf{X}_j^2] \|_2
%    \leq 1 + \sum_{i=1}^{|\Theta|} \frac{\sum_{k \in \Theta_i} \|\mathbf{U}_k\|_2^2}{\sum_{k \in \Theta_i} \frac{\ell_k}{n}} \sum_{k' \in \Theta_i} \|\mathbf{U}_{k'}\|_2^2
%\end{align}
%\begin{align}
%    \mathbb{E}[ \mathbf{X}_j^2]
%    = \mathbf{P} - 2n \mathbb{E}\left[\frac{\mathbf{A}_{\mathbf{z}(j)}^\top \mathbf{A}_{\mathbf{z}(j)} + \mathbf{A}_{\bar{\mathbf{z}}(j)}^\top \mathbf{A}_{\bar{\mathbf{z}}(j)}}{p_{\mathbf{z}(j)} + p_{\bar{\mathbf{z}}(j)}} \right]
%    + n^2 \mathbb{E}\left[\frac{\left(\mathbf{A}_{\mathbf{z}(j)}^\top \mathbf{A}_{\mathbf{z}(j)} + \mathbf{A}_{\bar{\mathbf{z}}(j)}^\top \mathbf{A}_{\bar{\mathbf{z}}(j)}\right)^2}{(p_{\mathbf{z}(j)} + p_{\bar{\mathbf{z}}(j)})^2}\right]
%\end{align}

\end{proof}

%We will use the following notation to describe the 
%Consider a general matrix $\mathbf{A} \in \mathbb{R}^{\rho \times n}$.
%Define a partition $\Theta$ of the $\rho$ integers e.g., $\Theta = \{\{1,\rho\}, \{2,\rho-1\}, \ldots, \{\lceil n/2 \rceil\}\}$.
%We will sample rows from the matrix $\mathbf{A}$ using the partition and a distribution $\mathbf{p} \in \mathbb{R}^\rho$.
%Define the probability of selecting the rows in $\Theta_i$ as $p^+_i \vcentcolon = \sum_{k \in \Theta_i} p_k$.
%We can use this notation to describe the sampling matrix $\mathbf{S}$ produced in Algorithm~\ref{alg:leverage_shap_paired}.
%For $j \in [m]$, let $i(j)$ be the partition randomly selected as the $j$th sample.
%Then the $j$th row of $\mathbf{S}$ contains zeros except for $1/\sqrt{p^+_{i(j)} m}$ in the positions corresponding to the indices in $\Theta_{i(j)}$.

We will also show that the sampling matrix preserves the Frobenius norm.

\begin{lemma}[Frobenius Approximation]\label{lemma:frobenius}
    Consider the block random sampling matrix $\mathbf{S}$ described above with rows sampled according to the leverage scores of $\mathbf{U} \in \mathbb{R}^{\rho \times n}$.
    Let $\mathbf{V} \in \mathbb{R}^{\rho \times n'}$.
    As long as $m \geq \frac1{\delta \epsilon^2}$, then
    \begin{align}
        \left\| \mathbf{U^\top S^\top S V} - \mathbf{U^\top V} \right\|_F \leq \epsilon \| \mathbf{U}\|_F \|\mathbf{V} \|_F
    \end{align}
    with probability $1-\delta$.
\end{lemma}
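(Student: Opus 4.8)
The plan is to read this as an approximate matrix multiplication statement and to control the deviation through its second moment followed by Markov's inequality, as in the standard (unpaired) analysis, while carefully tracking the coupling introduced by sampling rows two at a time. First I would write the sampled product as an average over the $M = m/2$ independent block draws. Letting $i(t)$ denote the block chosen on draw $t$ and defining
\begin{align}
\mathbf{Y}_t = \frac{1}{p^+_{i(t)}} \sum_{k \in \Theta_{i(t)}} \mathbf{U}_k^\top \mathbf{V}_k,
\end{align}
we have $\mathbf{U}^\top \mathbf{S}^\top \mathbf{S} \mathbf{V} = \frac{1}{M} \sum_{t=1}^{M} \mathbf{Y}_t$. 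Since $\Theta$ is a partition and $p^+_i = (\sum_{k \in \Theta_i} \ell_k) / \sum_j \ell_j$, each draw is unbiased, $\E[\mathbf{Y}_t] = \mathbf{U}^\top \mathbf{V}$, so the estimator itself is unbiased.

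Next I would bound the error in expected squared Frobenius norm. Because the draws are i.i.d.\ and unbiased, the cross terms vanish and $\E[\| \mathbf{U}^\top \mathbf{S}^\top \mathbf{S} \mathbf{V} - \mathbf{U}^\top \mathbf{V} \|_F^2] = \frac{1}{M} \E[\| \mathbf{Y}_1 - \mathbf{U}^\top \mathbf{V} \|_F^2] \le \frac{1}{M} \E[\| \mathbf{Y}_1 \|_F^2]$, using that the variance is at most the second moment. The heart of the argument is the per-draw bound $\E[\| \mathbf{Y}_1 \|_F^2] \le \| \mathbf{U} \|_F^2 \| \mathbf{V} \|_F^2$. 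Expanding gives $\E[\| \mathbf{Y}_1 \|_F^2] = \sum_i \frac{1}{p^+_i} \| \sum_{k \in \Theta_i} \mathbf{U}_k^\top \mathbf{V}_k \|_F^2$; I would bound the inner term by the triangle inequality together with $\| \mathbf{U}_k^\top \mathbf{V}_k \|_F = \| \mathbf{U}_k \|_2 \| \mathbf{V}_k \|_2$, and then apply Cauchy--Schwarz across the $|\Theta_i|$ rows of the block. Writing the common leverage score in block $i$ as $\ell^{(i)}$ (equal within a block by assumption), the factors $\ell^{(i)}$ and $|\Theta_i|$ cancel against $p^+_i = |\Theta_i| \ell^{(i)} / \| \mathbf{U} \|_F^2$, leaving $\frac{1}{p^+_i} \| \sum_{k \in \Theta_i} \mathbf{U}_k^\top \mathbf{V}_k \|_F^2 \le \| \mathbf{U} \|_F^2 \sum_{k \in \Theta_i} \| \mathbf{V}_k \|_2^2$. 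Summing over the blocks yields $\E[\| \mathbf{Y}_1 \|_F^2] \le \| \mathbf{U} \|_F^2 \| \mathbf{V} \|_F^2$.

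Finally I would apply Markov's inequality to $\| \mathbf{U}^\top \mathbf{S}^\top \mathbf{S} \mathbf{V} - \mathbf{U}^\top \mathbf{V} \|_F^2$: with the second-moment bound $\frac{1}{M} \| \mathbf{U} \|_F^2 \| \mathbf{V} \|_F^2$, the probability that the squared error exceeds $\epsilon^2 \| \mathbf{U} \|_F^2 \| \mathbf{V} \|_F^2$ is at most $\frac{1}{M \epsilon^2}$, which is at most $\delta$ once $M = m/2$ is a constant multiple of $\frac{1}{\delta \epsilon^2}$, matching the stated requirement $m \ge \frac{1}{\delta \epsilon^2}$ up to a constant.

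The only genuine departure from the textbook approximate-multiplication proof is the paired structure, which replaces the clean per-row variance $\frac{1}{p_k} \| \mathbf{U}_k^\top \mathbf{V}_k \|_F^2$ by the coupled block term $\frac{1}{p^+_i} \| \sum_{k \in \Theta_i} \mathbf{U}_k^\top \mathbf{V}_k \|_F^2$. I expect the main obstacle to be the bookkeeping that keeps the $|\Theta_i|$ and $\ell^{(i)}$ factors canceling, so that this coupled term still sums to exactly $\| \mathbf{U} \|_F^2 \| \mathbf{V} \|_F^2$ without a spurious blow-up; the equal-within-block leverage assumption is precisely what makes that cancellation exact.
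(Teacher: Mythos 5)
Your proposal is correct and takes essentially the same route as the paper's proof: both bound the expected squared Frobenius error of the block-sampled product by $\frac{1}{m}\|\mathbf{U}\|_F^2\|\mathbf{V}\|_F^2$ using the leverage-score cancellation $\frac{1}{p^+_i}\bigl(\sum_{k\in\Theta_i}\|\mathbf{U}_k\|_2^2\bigr)\bigl(\sum_{k\in\Theta_i}\|\mathbf{V}_k\|_2^2\bigr)=\|\mathbf{U}\|_F^2\sum_{k\in\Theta_i}\|\mathbf{V}_k\|_2^2$, and then finish with Markov's inequality. The only difference is that the paper imports the per-block second-moment bound from Proposition 2.2 of Wu (2018), whereas you rederive it directly (unbiasedness, triangle inequality, Cauchy--Schwarz), which is sound --- and in fact shows the equal-leverage-within-block assumption is not needed here --- at the cost of only the factor-of-two in $M=m/2$ that you already acknowledge.
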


\begin{proof}[Proof of Lemma \ref{lemma:frobenius}]
By Proposition 2.2 in \citet{wu2018note}, we have that
\begin{align}
    \mathbb{E} [ \|  \mathbf{U^\top S^\top S V} - \mathbf{U^\top V}\|_F^2]
    \leq \frac1{m} \sum_{j=1}^{|\Theta|}
    \frac1{p^+_{i(j)}} \|\mathbf{U}_{\Theta_{i(j)}}\|_F^2 \|\mathbf{V}_{\Theta_{i(j)}}\|_F^2
\end{align}
where $\mathbf{U}_{\Theta_i} \in \mathbb{R}^{n \times |\Theta_i|}$ is a matrix with columns $\mathbf{U}_k$ for $k \in \Theta_i$.
Because $p_k = \| \mathbf{U}_k \|_2^2 / \| \mathbf{U} \|_F^2$ by the definition of leverage scores, we have
\begin{align}
    \mathbb{E} [ \|  \mathbf{U^\top S^\top S V} - \mathbf{U^\top V} \|_F^2]
    \leq \frac1{m} \sum_{i=1}^{|\Theta|}
    \| \mathbf{U} \|_F^2 \frac{\sum_{k \in \Theta_i} \| \mathbf{U}_k \|_2^2}{\sum_{k \in \Theta_i} \| \mathbf{U}_k \|_2^2} \|\mathbf{V}_{\Theta_i}\|_F^2
    = \frac1{m} \| \mathbf{U} \|_F^2 \| \mathbf{V} \|_F^2.
\end{align}
By Markov's inequality,
\begin{align}
    \Pr \left(\left\| \mathbf{U^\top S^\top S V} - \mathbf{U^\top V} \right\|_F > \epsilon \| \mathbf{U}\|_F \|\mathbf{V} \|_F \right)
    &\leq \frac{\E \left[ \left\| \mathbf{U^\top S^\top S V} - \mathbf{U^\top V} \right\|_F^2 \right]}{\epsilon^2 \| \mathbf{U}\|_F^2 \|\mathbf{V} \|_F^2}
    \leq \frac1{m \epsilon^2} \leq \delta
\end{align}
as long as $m \geq \frac1{\delta \epsilon^2}$.
\end{proof}

With Lemmas~\ref{lemma:spectral} and~\ref{lemma:frobenius} already proved for the special paired leverage score sampling, the following analysis is standard.
We include the following proof for completeness.

\begin{proof}[Proof of Theorem~\ref{thm:full_leverage_score}]
Observe that
\begin{align}\label{eq:to_prove_paired}
    \| \mathbf{A} \tilde{\boldsymbol{\phi}} - \mathbf{b} \|_2^2
    = \| \mathbf{A} \tilde{\boldsymbol{\phi}} - \mathbf{A} \boldsymbol{\phi} + \mathbf{A} \boldsymbol{\phi} - \mathbf{b} \|_2^2    
    = \| \mathbf{A} \tilde{\boldsymbol{\phi}} - \mathbf{A} \boldsymbol{\phi} \|_2^2 + \| \mathbf{A} \boldsymbol{\phi} - \mathbf{b} \|_2^2
\end{align}
where the second equality follows because $\mathbf{A} \boldsymbol{\phi} - \mathbf{b}$ is orthogonal to any vector in the span of $\mathbf{A}$.    
So to prove the theorem, it suffices to show that
\begin{align}
\| \mathbf{A \tilde{\boldsymbol{\phi}} - A \boldsymbol{\phi}} \|_2^2
\leq  \epsilon
\| \mathbf{A \boldsymbol{\phi} - b} \|_2^2.
\end{align}

Let $\mathbf{U} \in \mathbb{R}^{\rho \times n}$ be an orthonormal matrix that spans the columns of $\mathbf{A}$.
There is some $\mathbf{y}$ such that $\mathbf{Uy} = \mathbf{A} \boldsymbol{\phi}$ and some $\tilde{\mathbf{y}}$ such that $\mathbf{U}\tilde{\mathbf{y}} = \mathbf{A} \tilde{\boldsymbol{\phi}}$.
Observe that $\| \mathbf{A \tilde{\boldsymbol{\phi}} - A \boldsymbol{\phi}} \|_2 = \| \mathbf{U}\tilde{\mathbf{y}} - \mathbf{Uy} \|_2 = \| \tilde{\mathbf{y}} - \mathbf{y} \|_2$ where the last equality follows because $\mathbf{U}^\top \mathbf{U} = \mathbf{I}$.

By the reverse triangle inequality and the submultiplicavity of the spectral norm, we have
\begin{align}
\| \tilde{\mathbf{y}} - \mathbf{y} \|_2
&\leq \| \mathbf{U^\top S^\top S U} (\tilde{\mathbf{y}} - \mathbf{y}) \|_2
+ \| \mathbf{U^\top S^\top S U} (\tilde{\mathbf{y}} - \mathbf{y})  - (\tilde{\mathbf{y}} - \mathbf{y}) \|_2
\\&\leq \| \mathbf{U^\top S^\top S U} (\tilde{\mathbf{y}} - \mathbf{y}) \|_2
+ \| \mathbf{U^\top S^\top S U} - \mathbf{I} \|_2 \| \tilde{\mathbf{y}} - \mathbf{y} \|_2.
\end{align}
Because $\mathbf{U}$ has the same leverage scores as $\mathbf{A}$ and the number of rows sampled in $\mathbf{S}$ is within a constant factor of $m$, we can apply Lemma~\ref{lemma:spectral}: With $m = O(n \log \frac{n}{\delta})$, we have $\| \mathbf{U^\top S^\top S U} - \mathbf{I} \|_2 \leq \frac12$ with probability $1-\delta/2$.
So, with probability $1-\delta/2$,
\begin{align}\label{eq:diff_UTSTSU}
\| \tilde{\mathbf{y}} - \mathbf{y} \|_2
\leq 
2 \| \mathbf{U^\top S^\top S U} (\tilde{\mathbf{y}} - \mathbf{y}) \|_2.
\end{align}
Then
\begin{align}
    \| \mathbf{U^\top S^\top S U} (\tilde{\mathbf{y}} - \mathbf{y}) \|_2
    &= \left\| \mathbf{U^\top S^\top} \left( \mathbf{S U} \tilde{\mathbf{y}} - \mathbf{Sb} + \mathbf{Sb} - \mathbf{S U} \mathbf{y} \right) \right\|_2
    \\&= \left\| \mathbf{U^\top S^\top S} \left(\mathbf{U}\mathbf{y} - \mathbf{b} \right)\right\|_2   
\end{align}
where the second equality follows because $\mathbf{SU} \tilde{\mathbf{y}} -\mathbf{S b}$ is orthogonal to any vector in the span of $\mathbf{SU}$.
By similar reasoning, notice that $\mathbf{U^\top} (\mathbf{U y - b}) = \mathbf{0}$.
Then, as long as $m = O(\frac{n}{\delta \epsilon})$, we have
\begin{align}
    \left\| \mathbf{U^\top S^\top S} \left(\mathbf{U}\mathbf{y} - \mathbf{b} \right)\right\|_2 
    \leq \frac{\sqrt{\epsilon}}{2 \sqrt{n}} \| \mathbf{U} \|_F \| \mathbf{Uy - b} \|_2 \label{eq:frobenius_bound}
\end{align}
with probability $1-\delta/2$ by Lemma~\ref{lemma:frobenius}.
Since $\mathbf{U}$ has orthonormal columns, $\|\mathbf{U}\|_F^2 \leq n$.
Then, combining inequalities yields
\begin{align}
    \| \mathbf{A \tilde{\boldsymbol{\phi}} - A \boldsymbol{\phi}} \|_2^2 
    = \| \tilde{\mathbf{y}} - \mathbf{y} \|_2^2
    \leq 2 \| \mathbf{U^\top S^\top S U} (\tilde{\mathbf{y}} - \mathbf{y}) \|_2
    \leq \epsilon \| \mathbf{Uy - b} \|_2 = \epsilon \| \mathbf{A} \boldsymbol{\phi} - \mathbf{b} \|_2^2
\end{align}
with probability $1-\delta$.
\end{proof}

\clearpage
\section{Extension to Probabilistic Values}\label{app:prob_vals}

In this section, we consider extending Kernel Banzhaf to probabilistic values.
We will first establish how the optimal solution connects to the probabilistic values.
Next, we will prove guarantees on the approximate solution selected via leverage score sampling.

In the remainder of this section, 
let $\mathbf{A}\in \mathbb{R}^{2^n \times n}$ such that
\begin{align}
    [\mathbf{A}]_{S,i} = \begin{cases}
        \frac{p_{|S|-1}}{a_n} & \textrm{if } i \in S \\
        -\frac{p_{|S|}}{a_n} & \textrm{if } i \notin S \\
    \end{cases}
\end{align}
where $i \in [n]$ and we use $S \subseteq[n]$ as an index for ease of notation.
Let $\mathbf{b} \in \mathbb{R}^{2^n}$ be the vector representation of the set function $v$ i.e., $b_S = v(S)$.

\subsection{Equivalence to Probabilistic Values}

\unconstrainedreg*

\begin{proof}[Proof of Lemma \ref{lemma:unconstrained_regression}]
Our proof strategy is to explicitly compute the optimal solution
\begin{align}
\mathbf{x}^* = (\mathbf{A}^\top \mathbf{A})^{-1}\mathbf{A}^\top \mathbf{b}.
\end{align}
We begin with $\mathbf{A}^\top \mathbf{A}$.
In the computation, we will repeatedly use the symmetry of $\mathbf{A}^\top \mathbf{A}$; namely, all the diagonal entries are the same and all the non-diagonal entries are the same.
The diagonal entries for $i \in [n]$ are given by
\begin{align}\label{eq:on_diag}
    a_n^2 [\mathbf{A}^\top \mathbf{A}]_{i,i}
    &= \sum_{S \subseteq [n]: i \in S} p_{|S|-1}^2 + 
    \sum_{S \subseteq [n]: i \notin S} p_{|S|}^2
    = \sum_{\ell=1}^{n} \binom{n-1}{\ell-1} p_{\ell-1}^2+
    \sum_{\ell=0}^{n-1} \binom{n-1}{\ell} p_{\ell}^2
    \\&= 2 \sum_{\ell=0}^{n-1} \binom{n-1}{\ell} p_{\ell}^2
\end{align}
where the last equality follows by a change of variable.
Similarly, the off-diagonal entries for $i \neq j$ are given by 
\begin{align}\label{eq:off_diag}
    a_n^2 [\mathbf{A}^\top \mathbf{A}]_{i,j}
    &= \sum_{S \subseteq [n]: i,j \in S} p_{|S|-1}^2
    - 2\sum_{S \subseteq [n]: i \in S,j \notin S} p_{|S|-1}p_{|S|}
    + \sum_{S \subseteq [n]: i,j \notin S} p_{|S|}^2 
    \\ &= \sum_{\ell=2}^{n} \binom{n-2}{\ell-2} p_{\ell-1}^2
    - 2 \sum_{\ell=1}^{n-1} \binom{n-2}{\ell-1} p_{\ell-1}p_{\ell}
    + \sum_{\ell=0}^{n-2} \binom{n-2}{\ell} p_{\ell}^2
    \\ &= \sum_{\ell=1}^{n-1} \binom{n-2}{\ell-1} \left(p_\ell - p_{\ell-1}\right)^2
    = a_n b_n.
\end{align}
By the definition of $a_n$, we have
\begin{align}
    a_n &= 2 \sum_{\ell=0}^{n-1} \binom{n-1}{\ell} p_{\ell}^2
    - \sum_{\ell=1}^{n-1} \binom{n-2}{\ell-1} \left(p_\ell - p_{\ell-1}\right)^2 
    = a_n^2 [\mathbf{A}^\top \mathbf{A}]_{i,i}
    - a_n^2 [\mathbf{A}^\top \mathbf{A}]_{i,j}.
\end{align}
It follows that $[\mathbf{A}^\top \mathbf{A}]_{i,i}- [\mathbf{A}^\top \mathbf{A}]_{i,j}=\frac1{a_n}.$
Then
\begin{align}\label{eq:unconstrained_ata_equals}
    \mathbf{A}^\top \mathbf{A}
    &= ([\mathbf{A}^\top \mathbf{A}]_{i,i} - [\mathbf{A}^\top \mathbf{A}]_{i,j}) \mathbf{I} + [\mathbf{A}^\top \mathbf{A}]_{i,j} \mathbf{1} \mathbf{1}^\top
    = \frac1{a_n}(\mathbf{I} +b_n\mathbf{1} \mathbf{1}^\top).
\end{align}
By the Sherman-Morrison formula,
$(\mathbf{I}+b_n \mathbf{1} \mathbf{1}^\top)^{-1}
= \mathbf{I} - \frac{b_n}{1+n b_n} \mathbf{1}\mathbf{1}^\top$.
Then
\begin{align}\label{eq:new_ata_inv}
    (\mathbf{A}^\top \mathbf{A})^{-1}
    =a_n \left(\mathbf{I} - \frac{b_n}{1+n b_n} \mathbf{1}\mathbf{1}^\top\right).   
\end{align}

Next, we compute $\mathbf{A}^\top \mathbf{v}$.
The $i$th entry of this $n \times 1$ vector is given by
\begin{align}
    a_n[\mathbf{A}^\top \mathbf{v}]_i
    &= \sum_{S \subseteq [n]: i \in S} p_{|S|-1} v(S)
    -  \sum_{S \subseteq [n]: i \notin S} p_{|S|} v(S) 
    = \sum_{S \subseteq [n] \setminus \{i\}}
    p_{|S|} (v(S \cup \{i\}) - v(S)) = \phi_i.
\end{align}
Therefore $\mathbf{A}^\top \mathbf{v}=\frac1{a_n} \boldsymbol{\phi}$.
Finally,
\begin{align}
    \mathbf{x}^*
    &= (\mathbf{A}^\top \mathbf{A})^{-1}\mathbf{A}^\top \mathbf{b}
    = a_n \left(\mathbf{I} - \frac{b_n}{1+n b_n} \mathbf{1}\mathbf{1}^\top \right)
    \frac1{a_n} \boldsymbol{\phi}
    \nonumber
\end{align}
and, taking the inverse,
\begin{align}
    \phi&= \left(\mathbf{I} - \frac{b_n}{1+n b_n} \mathbf{1 1^\top}\right)^{-1} \mathbf{x}^*
    = (\mathbf{I} + b_n \mathbf{1 1^\top}) \mathbf{x}^{*}.
\end{align}
\end{proof}

\subsection{Approximation Guarantees}

\probguarantee*

We prove Theorem \ref{thm:prob_guarantee} by applying Theorem \ref{thm:full_leverage_score} to the generalized regression problem.
First, we compute the leverage scores of the generalized regression problem.

\begin{lemma}\label{lemma:unconstrained_leverage}
Let $S \subseteq [n]$ be an index.
The corresponding leverage score of the generalized regression problem is
\begin{align}
    \ell_S &= 
    \frac1{a_n} \biggl[
    |S| p_{|S|-1}^2 + (n-|S|) p_{|S|}^2 - \frac{b_n}{1+n b_n} \left(|S|p_{|S|-1} - (n-|S|) p_{|S|})^2\right) \biggr].
    %\\&=  \frac1{a_n} \left[ p_{|S|-1}^2 |S| (1- c_n |S|) + p_{|S|}p_{|S|-1} 2c_n|S|(n-|S|) + p_{|S|}^2 (n-|S|)(1-c_n(n-|S|))\right]
\end{align}
\end{lemma}

\begin{proof}[Proof of Lemma \ref{lemma:unconstrained_leverage}]
Using the definition of leverage scores and Equation \ref{eq:new_ata_inv}, we have
\begin{align}
    \ell_S
    &= [\mathbf{A}]_S^\top (\mathbf{A}^\top \mathbf{A})^{-1} [\mathbf{A}]_S
    = a_n [\mathbf{A}]_S^\top \left(\mathbf{I} - \frac{b_n}{1+n b_n} \mathbf{1}\mathbf{1}^\top\right) [\mathbf{A}]_S
    = a_n \left(\| [\mathbf{A}]_S \|_2^2 - \frac{b_n}{1+n b_n} \| [\mathbf{A}]_S \|_1^2 \right).
\end{align}
By the definition of $\mathbf{A}$,
the $\ell_2$-norm squared is given by
\begin{align}
    \| [\mathbf{A}]_S \|_2^2
    &= \frac1{a_n^2} \left( \sum_{i \in S} p_{|S|-1}^2 + \sum_{i \notin S} p_{|S|}^2 \right)  
    = \frac1{a_n^2} \left( |S| p_{|S|-1}^2 + (n-|S|) p_{|S|}^2 \right)
\end{align}
while the $\ell_1$-norm squared is given by
\begin{align}
    \| [\mathbf{A}]_S \|_1^2
    &= \frac1{a_n^2} \left( \sum_{i \in S} p_{|S|-1} - \sum_{i \notin S} p_{|S|} \right)^2
    = \frac1{a_n^2} \left( |S| p_{|S|-1} - (n-|S|) p_{|S|} \right)^2.
\end{align}
The lemma statement follows.
\end{proof}

Next, we prove Theorem \ref{thm:prob_guarantee}.

\begin{proof}[Proof of Theorem \ref{thm:prob_guarantee}]
Let $\gamma = \|\mathbf{Ax}^* - \mathbf{b}\|_2^2/\| \mathbf{Ax}^*\|_2^2$ and
suppose there are $m = O(n \log (n/\delta) + n \log(1/\delta) / \epsilon)$ samples.
By the standard leverage score sampling guarantee in Theorem \ref{thm:full_leverage_score} (see e.g., the first few lines of the proof of Corollary \ref{coro:l2norm} for the step-by-step process), we have
\begin{align}
    \| \mathbf{A} \tilde{\mathbf{x}} - \mathbf{A}\mathbf{x}^* \|_2^2 \leq \epsilon \| \mathbf{A}\mathbf{x}^* - \mathbf{b} \|_2^2 = \epsilon \gamma \| \mathbf{A}\mathbf{x}^* \|_2^2
\end{align}
with probability $1-\delta$.
For the data matrix $\mathbf{A}$ in the unconstrained regression problem, we have
\begin{align}
    \| \mathbf{Ax} \|_2^2 = \mathbf{x^\top A^\top A x} = \frac1{a_n} \mathbf{x}^\top (\mathbf{I} + b_n \mathbf{11}^\top) \mathbf{x} = \frac1{a_n} \left[\| \mathbf{x}\|_2^2 + b_n (\mathbf{1^\top x})^2\right]
\end{align}
where the second equality follows by Equation \ref{eq:unconstrained_ata_equals}.

Then the leverage score sampling guarantee implies
\begin{align}
    \frac1{a_n} \left[\| \mathbf{x}^* - \tilde{\mathbf{x}}\|_2^2 + b_n (\mathbf{1}^\top (\mathbf{x}^* - \tilde{\mathbf{x}}))^2\right] \leq \epsilon \gamma
    \frac1{a_n} \left[\| \mathbf{x}^* \|_2^2 + b_n (\mathbf{1}^\top \mathbf{x}^*)^2\right]
\end{align}
so
\begin{align}
    \| \mathbf{x}^* - \tilde{\mathbf{x}}\|_2^2 
    \leq \epsilon \gamma \left(
    \| \mathbf{x}^* \|_2^2 + b_n (\mathbf{1}^\top \mathbf{x}^*)^2 \right)
    \leq \epsilon \gamma (1 + n b_n) \| \mathbf{x}^*\|_2^2
\end{align}
where the last inequality follows because $(\mathbf{1}^\top \mathbf{x}^*)^2 \leq \| \mathbf{1} \|_2^2\| \mathbf{x}^* \|_2^2 = n \| \mathbf{x}^* \|_2^2$ by Cauchy-Schwarz.

Since $b_n$ is non-negative, observe that $\|\mathbf{I} + b_n \mathbf{11}^\top \|_2 = 1+n b_n $ and $\|(\mathbf{I} + b_n \mathbf{11}^\top)^{-1} \|_2 = 1$.
It follows that
\begin{align}
  \| \mathbf{x}^*\|_2^2 = \|(\mathbf{I} + b_n \mathbf{11}^\top)^{-1}(\mathbf{I} + b_n \mathbf{11}^\top)  \mathbf{x}^*\|_2^2  
  = \|(\mathbf{I} + b_n \mathbf{11}^\top)^{-1} \boldsymbol{\phi}\|_2^2
    \leq  \|(\mathbf{I} + b_n \mathbf{11}^\top)^{-1}\|_2^2 \cdot \| \boldsymbol{\phi} \|_2^2 = \| \boldsymbol{\phi} \|_2^2.
\end{align}
Then
\begin{align}
\| \boldsymbol{\phi} - \tilde{\boldsymbol{\phi}} \|_2^2
&= \| (\mathbf{I} + b_n \mathbf{11}^\top) ( \mathbf{x}^* - \tilde{\mathbf{x}}) \|_2^2
\\&\leq \| \mathbf{I} + b_n \mathbf{11}^\top \|_2^2 \cdot \|  \mathbf{x}^* - \tilde{\mathbf{x}} \|_2^2
\\&\leq \epsilon \gamma(1+n b_n)^3  \| \mathbf{x}^*\|_2^2
\\&\leq \epsilon \gamma(1+n b_n)^3  \| \boldsymbol{\phi}\|_2^2
\end{align}
\end{proof}

\clearpage
\section{Datasets and Models}
\label{appendix:dataset}

\subsection*{Datasets}
The Diabetes dataset \citep{bache2013uci}, sourced from the National Institute of Diabetes and Digestive and Kidney Diseases, comprises 8 features. Its primary objective is to predict, based on diagnostic measurements, whether a patient has diabetes, thus it's categorized as a classification task. 
The Census Income dataset \citep{bache2013uci, covert2020improving}, also known as the Adult dataset, involves predicting whether an individual's income exceeds \$50K/yr based on census data, using 14 features. 
The Portuguese Bank Marketing dataset \citep{moro2014data} is another classification task with 16 features aimed at predicting client subscription to a term deposit. 
The German Credit dataset \citep{bache2013uci}, known as Statlog, involves classifying individuals as having good or bad credit risks based on 20 attributes. 
The NHANES dataset, with 79 features derived from the National Health and Nutrition Examination Survey (NHANES) I Epidemiologic Followup Study, models the risk of death over a 20-year follow-up period, as discussed in \citep{lundberg2020local, karczmarz2022improved}. 
For the Breast Cancer (BRCA) subtype classification dataset, 100 out of 17,814 genes were selected to minimize overfitting in a relatively small dataset of 510 patients, following guidelines from \citep{covert2020improving}. 
The Communities and Crime Unnormalized dataset \citep{bache2013uci} aims to predict the total number of violent crimes per 100,000 population, comprising a predictive regression task with 101 features. The Tezpur University Android Malware Dataset (TUANDROMD) \citep{bache2013uci} includes 241 attributes, with the primary classification target distinguishing between malware and goodware.

These datasets vary in size and column types and are predominantly utilized in previous studies for semi-value-based model explanation \citep{lundberg2017unified,covert2020improving, lundberg2020local, karczmarz2022improved}. We primarily focus on tabular datasets because they are more thoroughly studied in this field and allow for easier acquisition of ground truth, especially in large datasets, using tree-based algorithms. Additionally, tabular datasets are prevalent in scenarios involving smaller datasets with fewer features.

\subsection*{Models}
For the experiments involving tree set functions, we trained an XGBoost regressor model \citep{chen2016xgboost} with 100 trees and a maximum depth of 4. For the non-tree model experiments, we utilized a two-layer neural network equipped with a dropout layer with a rate of 0.5 to mitigate overfitting. This network was trained using a batch size of 32 and a learning rate of 0.0001, across 100 epochs. We chose this relatively simple model architecture because our primary focus is on explaining model behavior rather than maximizing its predictive accuracy.

%\paragraph{Feature perturbation and exact Banzhaf value calculation}
There are generally two approaches to handling removed features in feature perturbation for general set functions, as discussed in \citet{chen2020true} and \citet{kumar2020problems}. Given an explicand $x$ and a subset of features $S$,
define $\mathbf{x}^S$ as the observation where $\mathbf{x}^S_i = \mathbf{x}_i$ if feature $i \in S$ and, otherwise, $\mathbf{x}^S_i$ is sampled from one of two distributions.
The first method involves sampling from the conditional distribution of the removed features. This approach, while precise, is computationally expensive. Alternatively, the marginal distribution can be used where the observed features $\mathbf{x}^S_i$ for $i \in S$ are ignored. Due to its lower computational complexity, we adopt the latter approach.

%For implementation, distinct feature perturbation methods are applied depending on the set function, i.e., model type. 
%
For \emph{tree-based models}, as we need to use the tree-based algorithm for calculating the ground truth Banzhaf values to measure errors, we utilize a method aligned with Algorithm 1 from \citet{karczmarz2022improved}, which computes predictions using partial features. Specifically, during tree traversal, if feature $i \in S$, we proceed according to the threshold to select the child node; if $i \notin S$, we traverse both children and compute a weighted average of the predictions, effectively nullifying the influence of features not in $S$ without any feature value replacement. 
We use XGBoost \citep{chen2016xgboost} for our tree-based models, which makes it possible to apply the tree-based algorithm of \citep{karczmarz2022improved} to compute exact Banzhaf values for $n \geq 30$. 

For \emph{neural network models}, instead of using fixed baseline values for the removed features, we compute the average of the model’s predictions using replacement values randomly sampled from 50 baseline points, different from the explicand. For each explicand $\mathbf{x}$, the non-selected features in $\mathbf{x}^S$ are replaced by values from baseline points, and the average of $M(\mathbf{x}^S)$ is taken to estimate the impact of marginalizing out the non-selected features. To calculate ground truth Banzhaf values, we evaluate all $2^n$ subsets of features in this way.

%\paragraph{Sample size for Monte Carlo}
%Given a sample size of $n$, the MSR and Kernel Banzhaf estimators effectively use $n$ samples to estimate Banzhaf values for all features concurrently. In contrast, the Monte Carlo method estimates each feature independently and requires a fair allocation of the total $m$ samples. To achieve this, we implement a loop where for each iteration $i = 1, \ldots, m$, we use $i \mod n + 1$ samples when $i$ is divisible by $n$, and $i \mod n$ samples otherwise. This approach ensures that the Monte Carlo method also utilizes exactly $m$ samples, maintaining consistency in sample usage across different estimators.

\clearpage
\section{Neural Network Models}
\label{app:neural_network}

In this section, we evaluate Banzhaf estimators for explaining neural network models, where the output of the set function is the neural network's raw prediction. Calculating ground truth Banzhaf values for datasets with more than 50 features presents challenges, primarily because the tree-based algorithm for Banzhaf value calculation, as proposed in \citet{karczmarz2022improved}, is limited to tree models. Consequently, our experiments are confined to four smaller datasets. As illustrated in Figure~\ref{fig:l2-by-sample-size-nn}, Kernel Banzhaf estimators, both with and without paired sampling, consistently outperform Monte Carlo (MC) and Maximum Sample Reuse (MSR) estimators in non-tree set functions. These experiments further underscore the generalizability of our algorithm across different model types.

\begin{figure}[h]
    \centering
    \includegraphics[width=\linewidth]{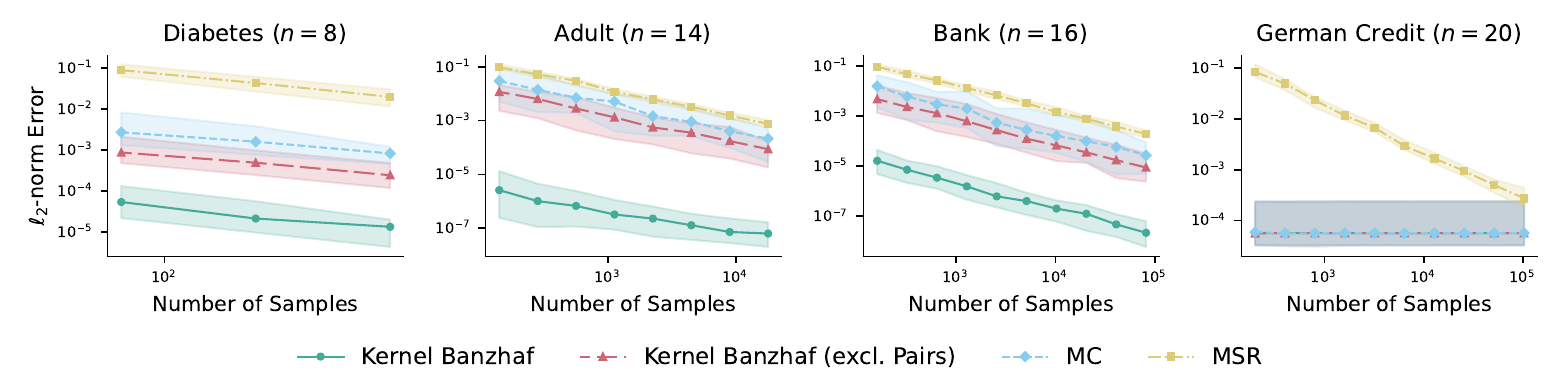}
    \caption{This figure compares the relative squared $\ell_2$-norm errors of Kernel Banzhaf (including an ablated version without paired sampling), MC, and MSR across increasing sample sizes on four small datasets in explaining neural network models. The results highlight the robust performance and generalizability of Kernel Banzhaf across various model types.}
    \label{fig:l2-by-sample-size-nn}
\end{figure}

\clearpage
\section{Alternative Error Metric: Regression Objective Error}\label{app:objective_error}

To further demonstrate the superior performance of Kernel Banzhaf, we evaluated the Banzhaf estimators against exact Banzhaf values using relative objective error in Figure \ref{fig:objective}. Building on the framework where Banzhaf values solve a linear regression problem (as established in Theorem~\ref{thm:equivalence}), we aim to minimize the objective function $\| \mathbf{A x-b} \|_2$. Here, $\phivec$ represents the vector of exact Banzhaf values for each feature, and $\approxphivec$ represents the estimations from different estimators. We evaluate the differences between $\| \mathbf{A \phivec-b} \|_2$ and $\| \mathbf{A \approxphivec-b} \|_2$, because although the optimal linear regression solution corresponds to the Banzhaf values, the optimal objective error $\| \mathbf{A \phivec-b} \|_2$ is not necessarily zero. Thus, we assess the relative objective error. The comparative analysis, shown in Figure~\ref{fig:objective}, reveals that the plots of this error with increasing sample size and noise level are analogous to those using relative squared $\ell_2$-norm error, where Kernel Banzhaf consistently surpasses all other estimators in both efficacy and robustness. These experiments underscore the effectiveness of our Kernel Banzhaf algorithm in accurately estimating Banzhaf values and solving the corresponding linear regression challenge.

\begin{figure}[h]
    \centering
    \includegraphics[width=\linewidth]{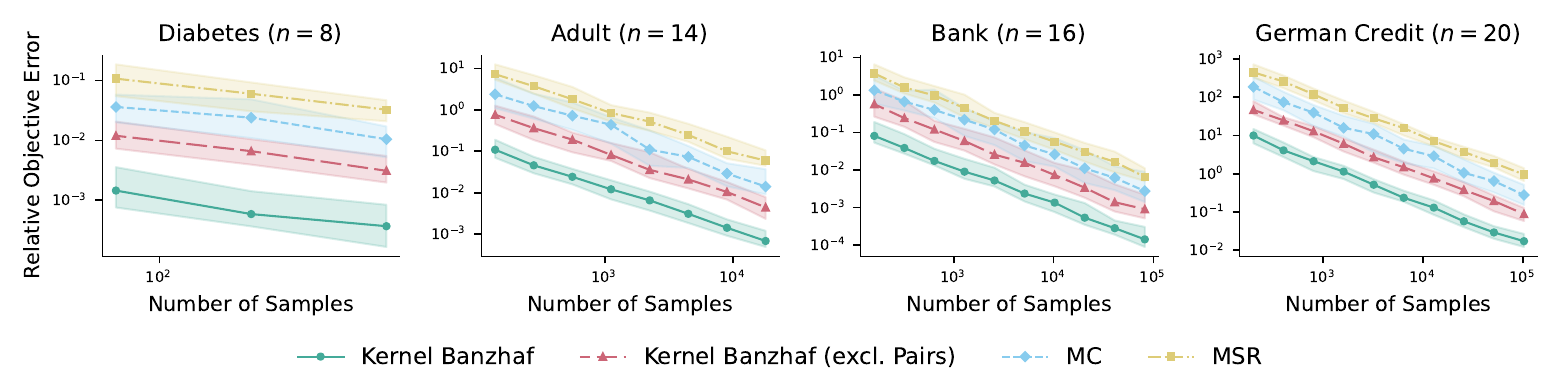}
    \includegraphics[width=\linewidth]{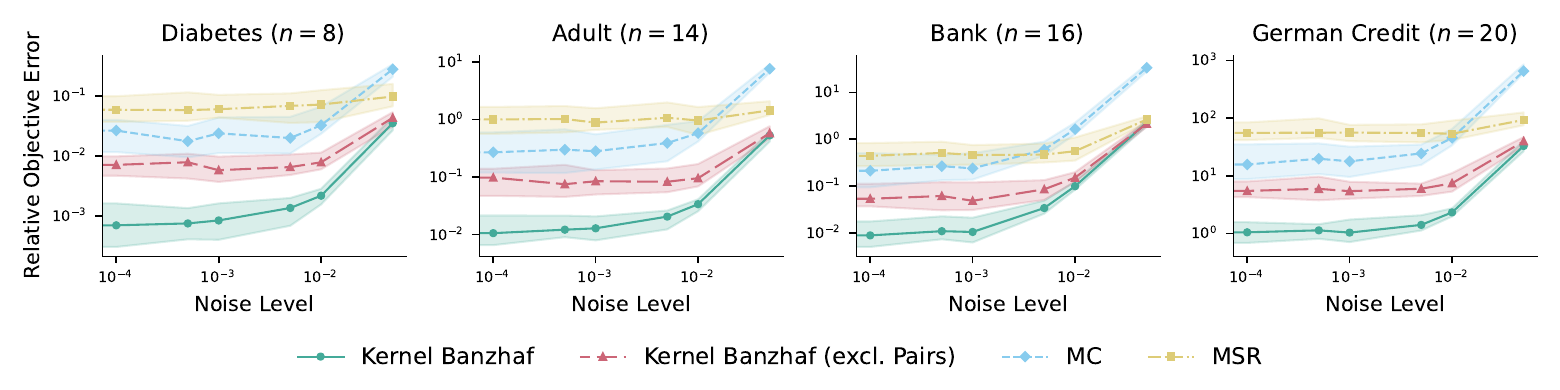}
    \caption{Relative objective error $\frac{\| \mathbf{A \approxphivec-b} \|_2 - \| \mathbf{A \phivec-b} \|_2}{\| \mathbf{A \phivec-b}\|_2}$ plotted against sample size (top row) and noise level (bottom row) for comparing Banzhaf estimators across four small datasets. Kernel Banzhaf consistently outperforms other estimators, demonstrating enhanced accuracy and robustness as sample sizes and noise levels vary. This performance highlights its efficacy in solving the linear regression problem where the solutions represent Banzhaf values.}
    \label{fig:objective}
\end{figure}

\clearpage

\section{Time Complexity}\label{app:time_complexity}

We further evaluated the computational efficiency of the MC, MSR, and Kernel Banzhaf estimators by measuring the exact time required to estimate Banzhaf values, as depicted in Figure~\ref{fig:time}. All experiments were conducted on a Lenovo SD650 with 128 GB of RAM, using only one thread for computation. Figure~\ref{fig:time} reveals that 
% both MSR and Kernel Banzhaf (with and without paired sampling) 
all four estimators exhibit comparable computational times across all datasets and sample sizes, since evaluating the set function $v$ dominates the estimator's time complexity.
% In contrast, the implementation of MC is slightly slower because the set function is evaluated sequentially rather than in parallel. This is a minor implementation difference that we will fix for the final version.
\begin{figure}[h]
    \centering
    \includegraphics[width=\linewidth]{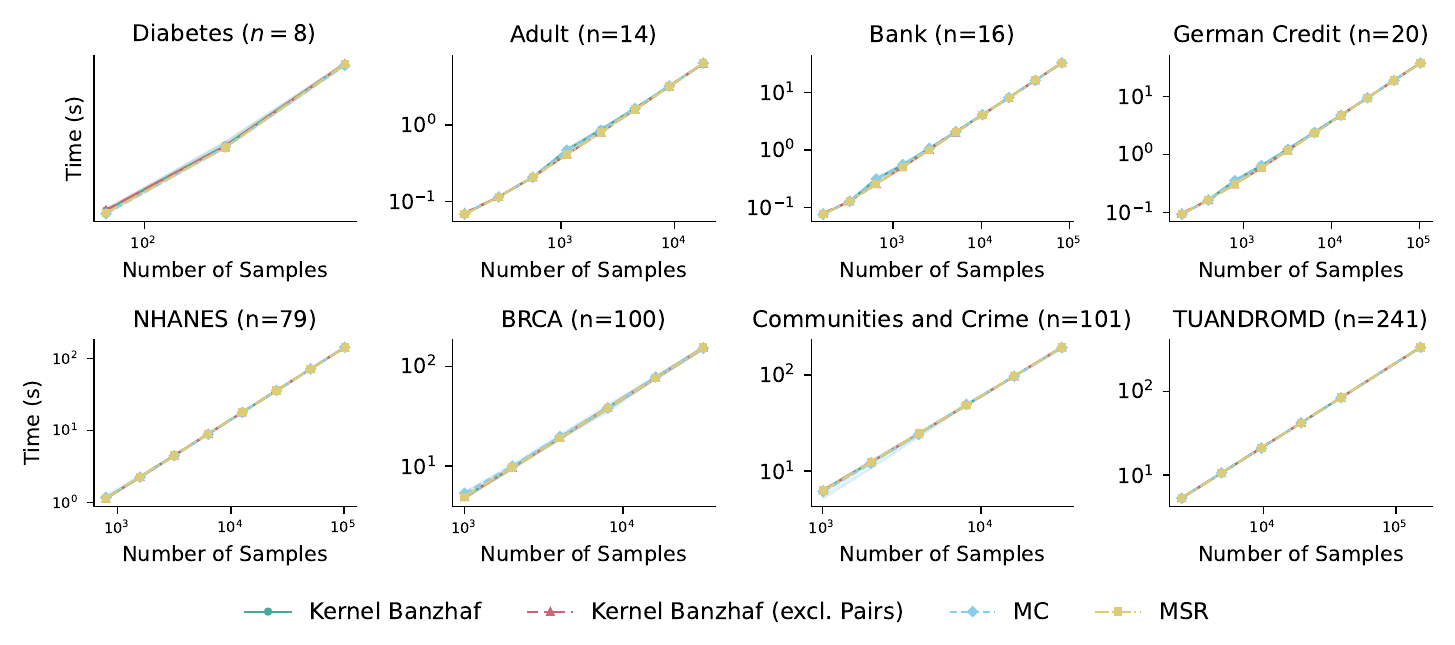}
    \caption{Computational time required for Banzhaf value estimation across varying sample sizes for eight datasets. 
    % Notably, Kernel Banzhaf (both with and without paired sampling) and MSR 
    All estimators demonstrate comparable computational efficiencies primarily driven by the time required for model prediction.
    % while MC consistently requires approximately twice the computation time, particularly evident when sample size is small.
    }
    \label{fig:time}
\end{figure}

\clearpage
\section{Ablation: Kernel Banzhaf With and Without Replacement}
\label{appendix:swor}

In order to ablate its sampling technique, we compare Kernel Banzhaf with and without replacement.
Our empirical results, as illustrated in Figure~\ref{fig:swor_vs_swr}, indicate that sampling without replacement does not yield improvements unless $m \geq 2^n$, in which case we can read the whole set function $v$ and exactly compute the Banzhaf values.
We suspect the reason that Kernel Banzhaf does not benefit from sampling without replacement is because it uniformly samples subsets to solve the regression problem.

\begin{figure}[h]
    \centering
    \includegraphics[width=\linewidth]{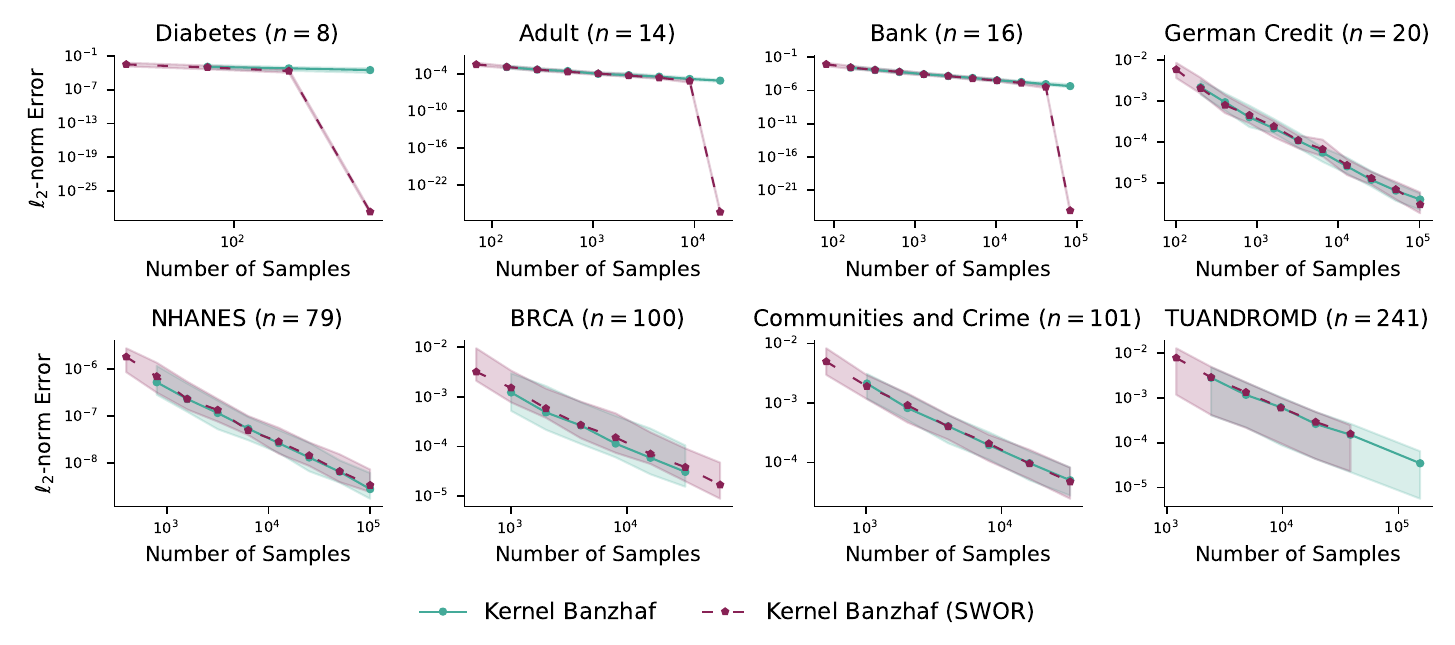}
    \caption{Comparison of relative squared $\ell_2$-norm error between Kernel Banzhaf and Kernel Banzhaf with sampling without replacement (SWOR) across eight datasets. When sample size $m < 2^n$, the performance of the two algorithms is quite similar, as illustrated by the overlapping lines. Note that the sharp error reduction in the first three small datasets occurs when sample size $m$ exceeds $2^n$, due to the SWOR algorithm exhaustively sampling all possible subsets at $m > 2^n$.}
    \label{fig:swor_vs_swr}
\end{figure}

\hide{
\clearpage
\section{Properties of Shapley Values and Banzhaf Values}\label{appendix:properties}

Shapley values satisfy four desirable properties \citep{shapley:book1953}: 
\begin{itemize}
    \item  \textit{Null Player} ensures that a player who does not contribute to any coalition, meaning their inclusion in any subset of players does not affect the overall outcome, is assigned a value of zero.
    \item  \textit{Symmetry} requires that two players who contribute equally to all possible coalitions receive the same value.
    \item \textit{Linearity} requires that the Shapley value of a player in a combined game (formed by adding two games together) is equal to the sum of that player's Shapley values in the two individual games.
    \item \textit{Efficiency} requires that the total value assigned to all players must sum to the value generated by the full set of players.
\end{itemize}

Banzhaf values three of the four properties, excluding \textit{efficiency} which requires that the total value assigned to all players sum to the value generated by the full set of players \citep{banzhaf1965}. 
Instead of the \emph{Efficiency} property, the Banzhaf index satisfies \textit{2-Efficiency}, which requires that the sum of the values of any two players equals the value of these two players when considered jointly in a reduced game setting \citep{banzhaf1965,lehrer1988axiomatization}.

The necessity of the efficiency property has been debated in the context of machine learning. 
\citet{sundararajan2017axiomatic} suggest that the \emph{Efficiency} property is only essential in contexts where semi-values, such as those in voting games, are interpreted numerically;
\citet{kwon22betashapley} argues that the utility function in machine learning applications often does not correspond directly to monetary value, so aligning the sum of data values with total utility is unnecessary.
In applications where the primary goal involves ranking features according to their importance or evaluating data, the exact numerical contribution of each feature is less critical.
Both Banzhaf and Shapley values, despite their theoretical disparities, often yield the same ordering of players as shown in \citet{karczmarz2022improved}, which suffices for these applications. Therefore, given their efficiency and robustness properties, Banzhaf values serve as particularly effective tools in machine learning tasks \citep{wang2023databanzhaf}.
}

\clearpage
\section{Feature Ranking Recovery}\label{app:feature_ranking}

%\begin{figure}[h]
%    \centering
%    \includegraphics[width=\linewidth]{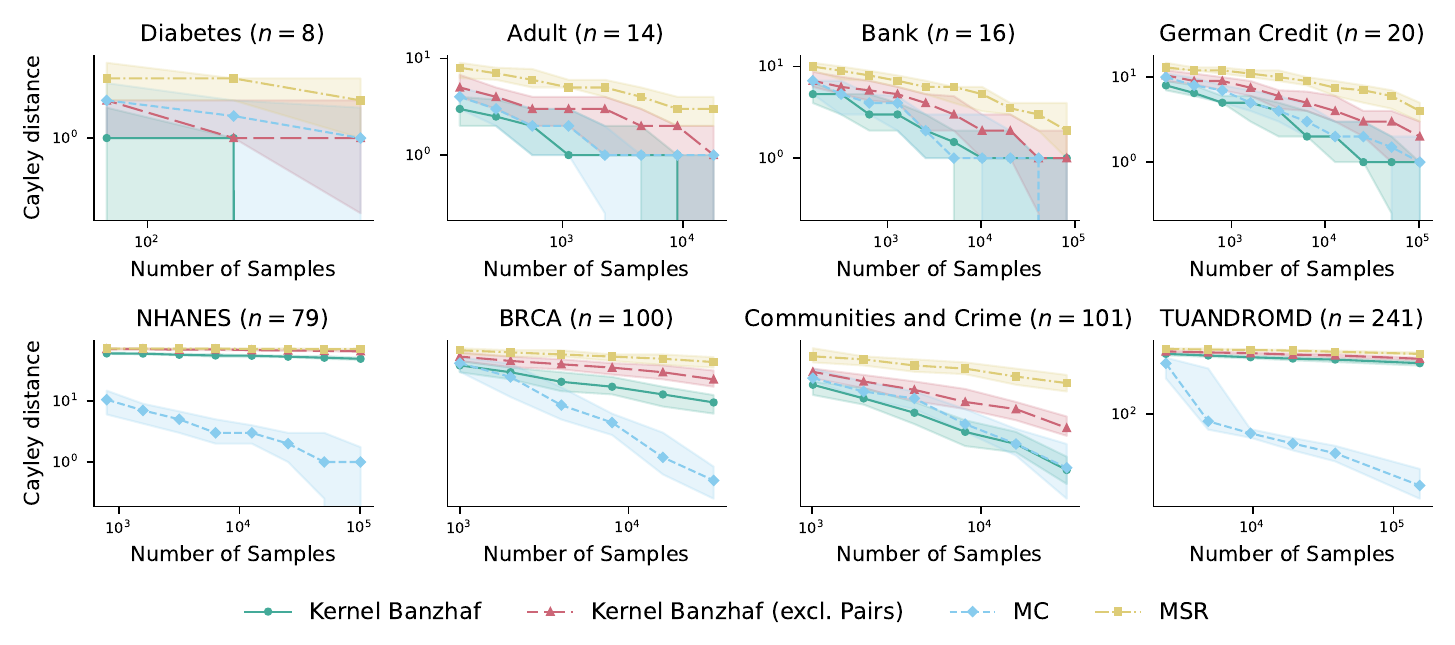}
%    \includegraphics[width=\linewidth]{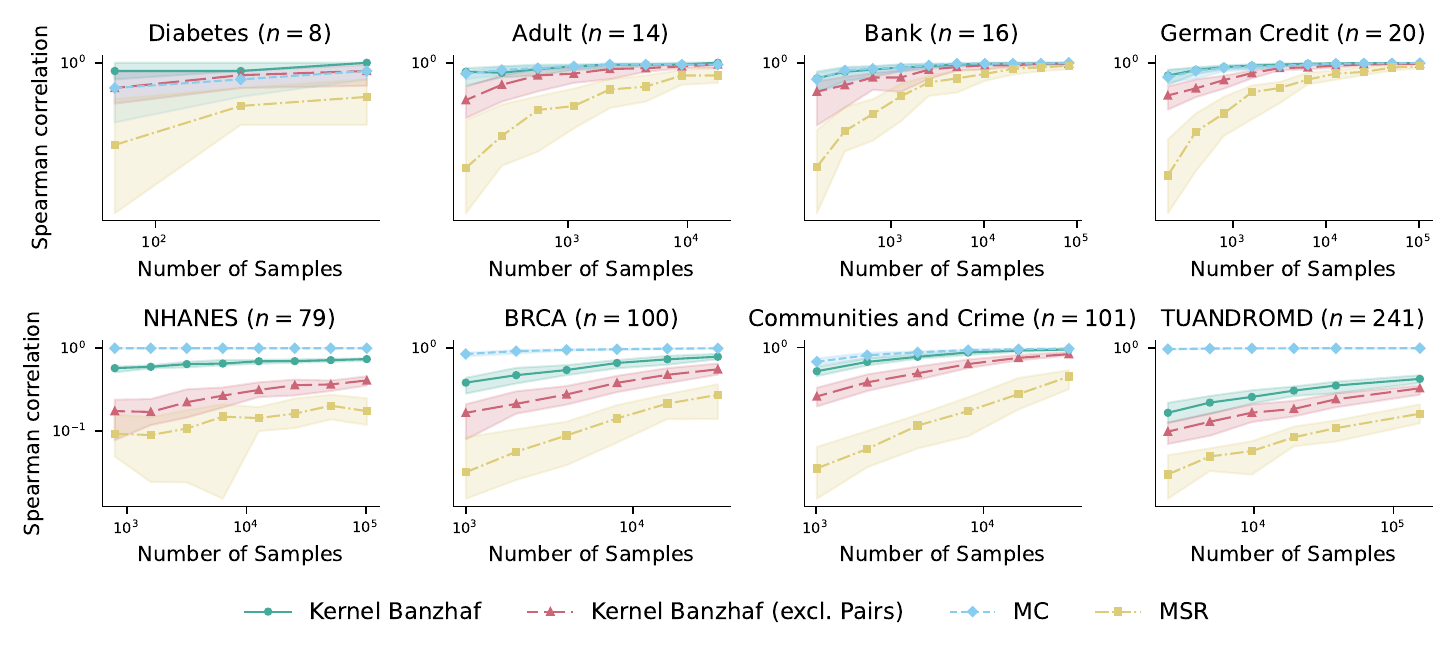}
%    \caption{{Comparison of feature ranking recovery Using Cayley Distance (\textbf{top panel}) and Spearman Correlation Coefficients (\textbf{bottom panel}). This figure illustrates the performance of different estimators in recovering overall feature rankings across multiple datasets. \textbf{Lower Cayley distances and higher Spearman correlations indicate more accurate feature ranking recovery.} Our results indicate that while the Kernel Banzhaf algorithm and the MC method perform comparably in datasets with a smaller number of features, the MC method excels as feature size increases due to its precision in assigning zero values to non-contributory features. However, the practicality of this metric is limited.}}
%    \label{fig:overall_ranking}
%\end{figure}

{Aside from evaluating the quantitative errors between estimated Banzhaf values and exact Banzhaf values, another critical metric that reveals the meaningfulness of the estimated results is how well the estimator recovers feature ranking. Feature ranking is important to feature comparison and selection, which are useful for enhancing the performance of machine learning models. Accurate feature ranking helps in identifying the most influential features, thereby facilitating more efficient and effective feature engineering and dimensionality reduction strategies.}

{In order to evaluate this property, we incorporate two well-known metrics: \textit{Cayley distance} and \textit{Spearman rank correlation}. The Cayley distance refers to the minimum number of transpositions required to transform one permutation into another. This metric provides a concrete measure of the difference between two rankings, capturing the minimal edit sequence needed, which is particularly useful in understanding the stability and reliability of feature ranking methods, and it's also adopted in \citet{karczmarz2022improved}.
Spearman's rank correlation, $\rho$, on the other hand, measures the strength and direction of association between two ranked variables. 
Formally, it is defined as the Pearson correlation coefficient between the rank values of the variables, mathematically expressed as:
\[
\rho = 1 - \frac{6 \sum d_i^2}{n(n^2 - 1)}
\]
where \(d_i\) represents the difference between the ranks of corresponding variables \(x_i\) and \(y_i\), and \(n\) is the number of observations.
This metric offers insights into how well the ranking produced by the estimator preserves the monotonic relationship compared to the exact ranking, providing a measure of ranking fidelity.}

{In scenarios with a large feature space, the most significant features often have a more pronounced impact on model predictions. In these cases, the overall ranking may be cluttered with a large number of features that show only minor differences in their Banzhaf values, making it difficult to distinguish among lower-ranked features effectively. 
% Therefore, emphasizing the ranking of top features is essential, as these are the variables that most critically influence the model's accuracy. 
Focusing on the top 20\% of features, therefore, targets those variables most likely to affect predictive accuracy and model stability, offering a more pragmatic evaluation of ranking recovery.}

{In our evaluation, we identify the top 20\% of features ranked by exact Banzhaf values. For any feature missing from the top 20\% as derived from the estimated Banzhaf values, we add it to the end of the permutation. This adjustment ensures that the evaluation penalizes discrepancies at the top of the distribution, which are most critical for decision-making and model interpretation.}

%{Our results for overall feature ranking recovery are illustrated in Figure~\ref{fig:overall_ranking}. When the feature size is small, our Kernel Banzhaf algorithm performs comparably to the MC method. However, as the feature size increases, the MC method shows superior performance. We hypothesize that this effectiveness stems from the MC method's precision in estimating zero Banzhaf values for non-contributory features, a key advantage when evaluating features individually in large datasets where many features have negligible Banzhaf values.}
%It is important to note, however, that the MC method requires twice the number of samples and predictions compared to other methodslates the marginal contribution of each feature individually. This increased demand can make it less efficient, particularly with a large number of features.

{Our findings for the top 20\% feature ranking recovery are presented in Figures~\ref{fig:topk_cayley} and \ref{fig:topk_spearman}. This task, focusing only on the ranks of the most significant features, is more practical. For datasets with feature size $\leq 20$, we take the top 7 features instead. Our Kernel Banzhaf algorithm consistently outperforms or matches the MC method across all datasets, demonstrating its effectiveness in identifying and ranking important features.}

We also explore feature ranking recovery on \textbf{all} the features in Figures \ref{fig:cayley} and \ref{fig:spearman}.
MC gives the best performance, followed by Kernel Banzhaf. Because of its sampling structure i.e., $v(S \cup \{i\})-v(S)$, MC is uniquely suited to recover Banzhaf values near 0. As shown in Figure \ref{fig:hist}, there are many Banzhaf values near 0 and hence MC gives the best performance when recovering these features.

\begin{figure}[h]
    \centering
    \includegraphics[width=\linewidth]{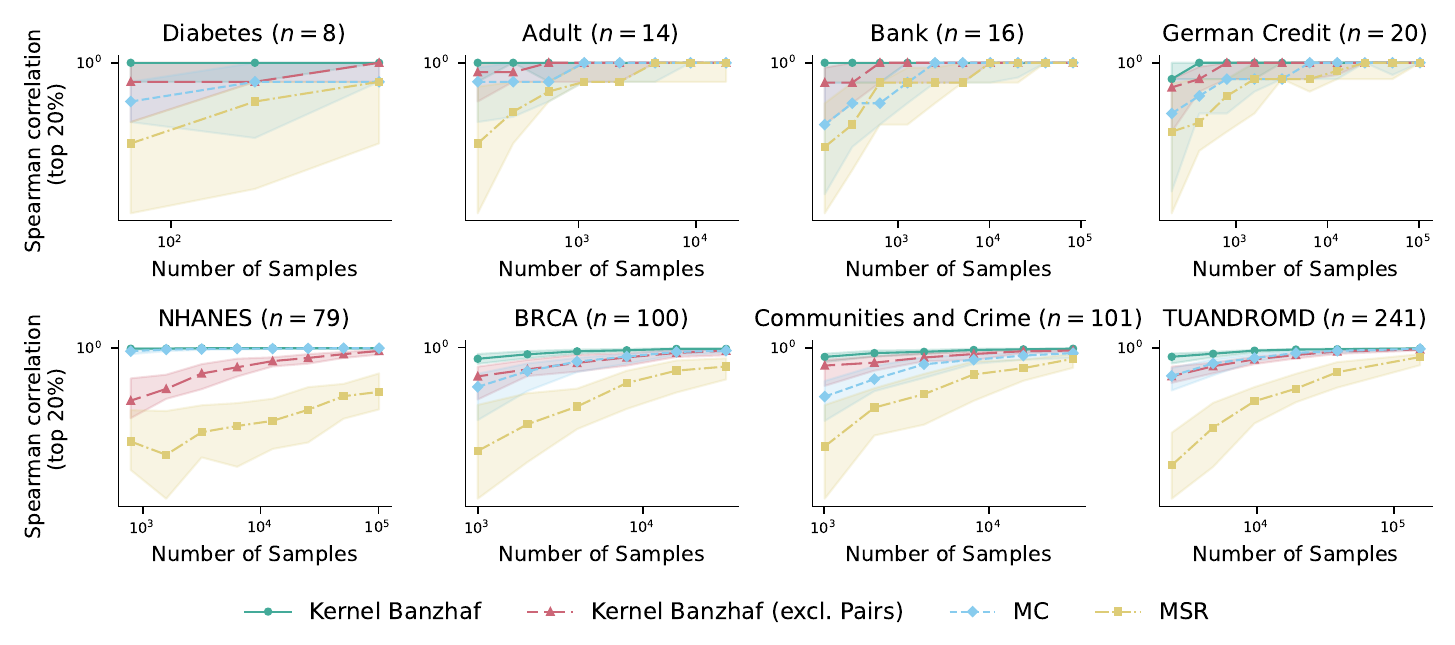}
    \caption{Comparison of top 20\% feature ranking recovery using Spearman correlation (higher is more accurate). Kernel Banzhaf consistently gives the best performance.}
    \label{fig:topk_spearman}
\end{figure}

\begin{figure}[h]
    \centering
    \includegraphics[width=\linewidth]{figures/all_datasets_cayley_by_sample_size.pdf}
    \caption{Comparison on \textbf{full} feature ranking recovery using Cayley distance (lower is more accurate). MC gives the best performance, followed by Kernel Banzhaf. Because of its sampling structure i.e., $v(S \cup \{i\})-v(S)$, MC is uniquely suited to recover Banzhaf values near 0. As shown in Figure \ref{fig:hist}, there are many Banzhaf values near 0.}
    \label{fig:cayley}
\end{figure}

\begin{figure}[h]
    \centering
    \includegraphics[width=\linewidth]{figures/all_datasets_spearman_by_sample_size.pdf}
    \caption{Comparison on \textbf{full} feature ranking recovery using Spearman correlation (higher is more accurate). MC gives the best performance, followed by Kernel Banzhaf. Because of its sampling structure i.e., $v(S \cup \{i\})-v(S)$, MC is uniquely suited to recover Banzhaf values near 0. As shown in Figure \ref{fig:hist}, there are many Banzhaf values near 0.}
    \label{fig:spearman}
\end{figure}

\begin{figure}
    \centering
    \includegraphics[width=\linewidth]{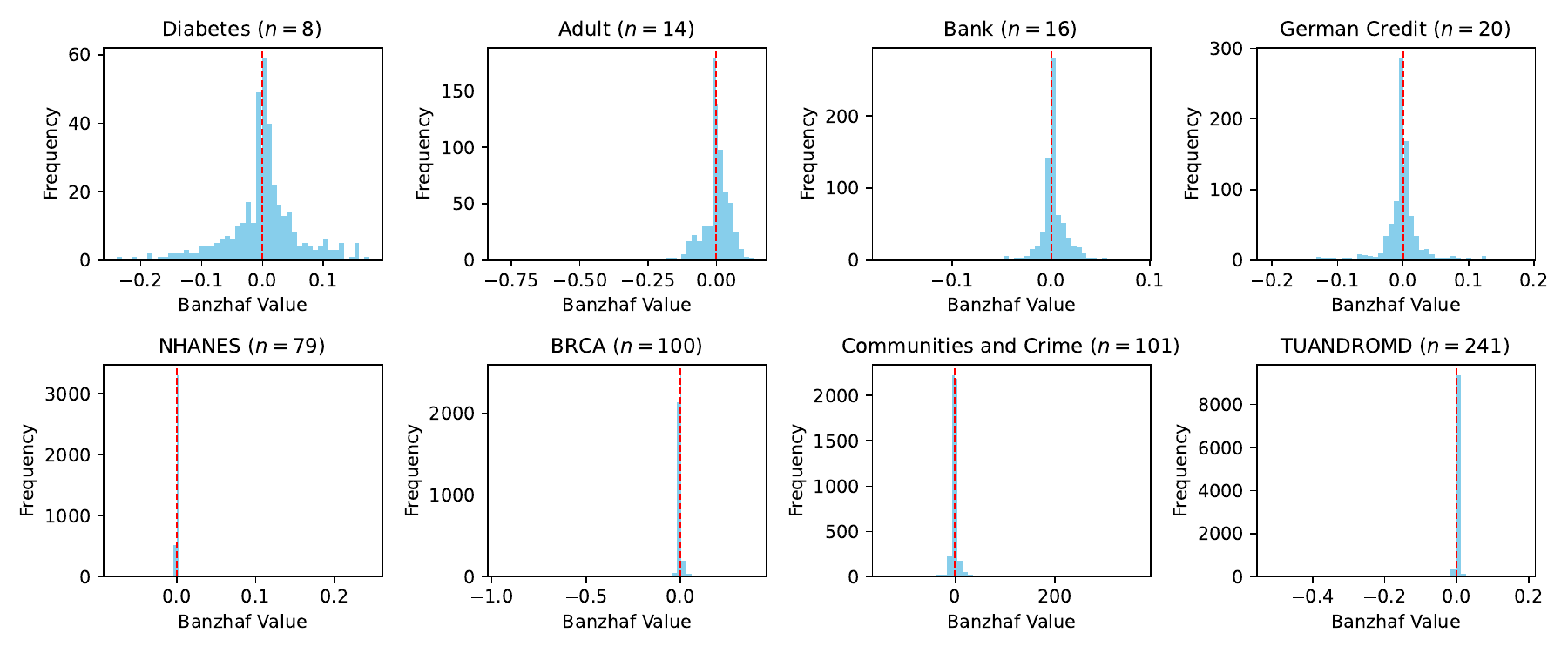}
    \caption{There are many Banzaf values near 0, giving MC an advantage if full feature ranking recovery.}
    \label{fig:hist}
\end{figure}

\clearpage
\section{Adversarial Perturbations}\label{app:adversarial_perturbations}

{To further demonstrate the robustness of our proposed estimator, instead of independently perturbing all queries to the set function, we only perturb sets $S$ that contain a chosen item $i$. In the first experiment setting, we select $i$ uniformly at random. Then, instead of observing $v(S)$ on the query to subset $S$, the algorithms observe $v(S) + x$ where $x$ 0 if $i \not \in S$ and, if $i \in S$, we have $x \sim \mathcal{N}(0, \sigma^2)$ as before. These results appear in Figure~\ref{fig:adver-noise}.}

\begin{figure}[h]
    \centering
    \includegraphics[width=\linewidth]{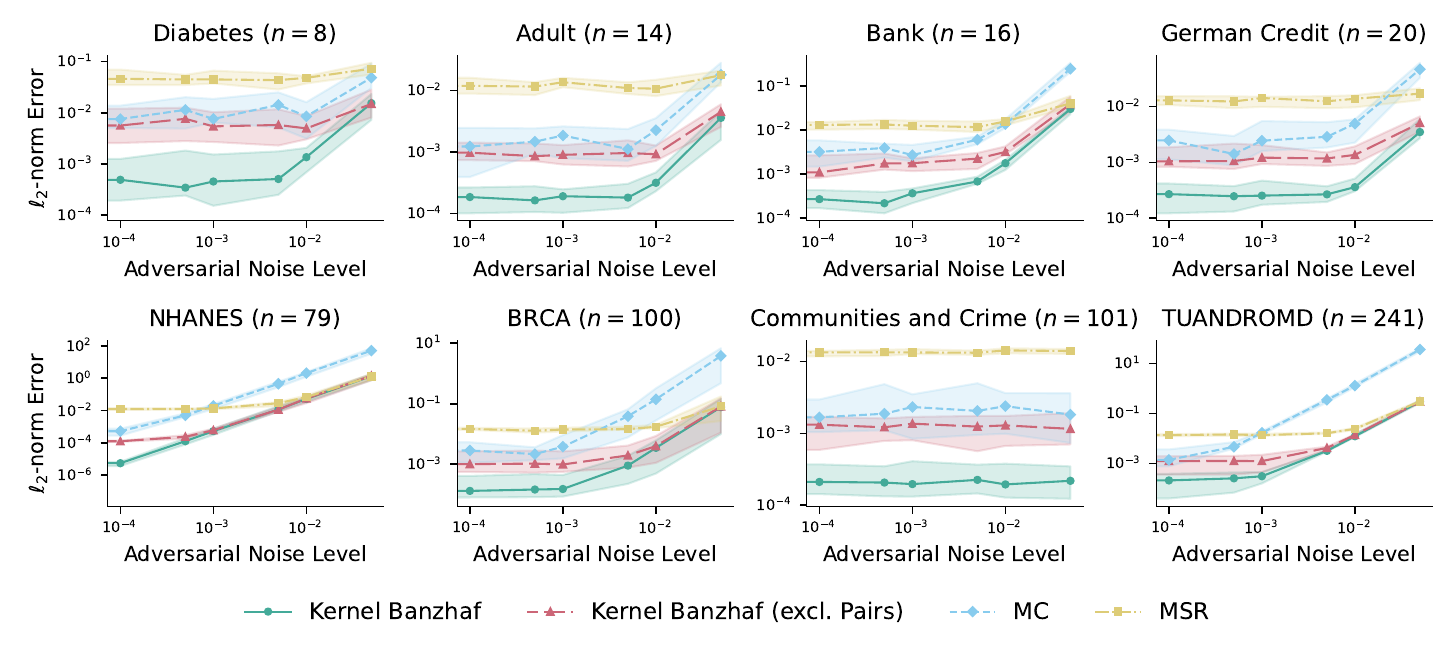}
    \caption{Plots of relative squared $\ell_2$-norm error by adversarial noise levels across Banzhaf estimators. For each noise level $\sigma$, the estimator observes $v(S) + x$ where $x \sim N(0,\sigma)$ when \textbf{set $S$ contains a randomly chosen feature $i$}.
    Similar to previous experiments, Kernel Banzhaf outperforms for lower noise levels, eventually matching its ablated version and MSR for larger noise.}
    \label{fig:adver-noise}
\end{figure}

{Beyond this, in a follow-up experiment, we introduce a more adversarial noise setting. Each algorithm is run once on set function $v$ (no perturbation in the query access). Then we compute the relative error of each estimated value $\tilde{\phi}_j$ relative to the baseline $\phi_j$. We select the item $i$ with the largest relative error. Then, we evaluate each algorithm as before, but now the queries are perturbed if the set $S$ contains the adversarially chosen $i$. These results appear in Figure~\ref{fig:adver-plus-noise}}.

\begin{figure}[h]
    \centering
    \includegraphics[width=\linewidth]{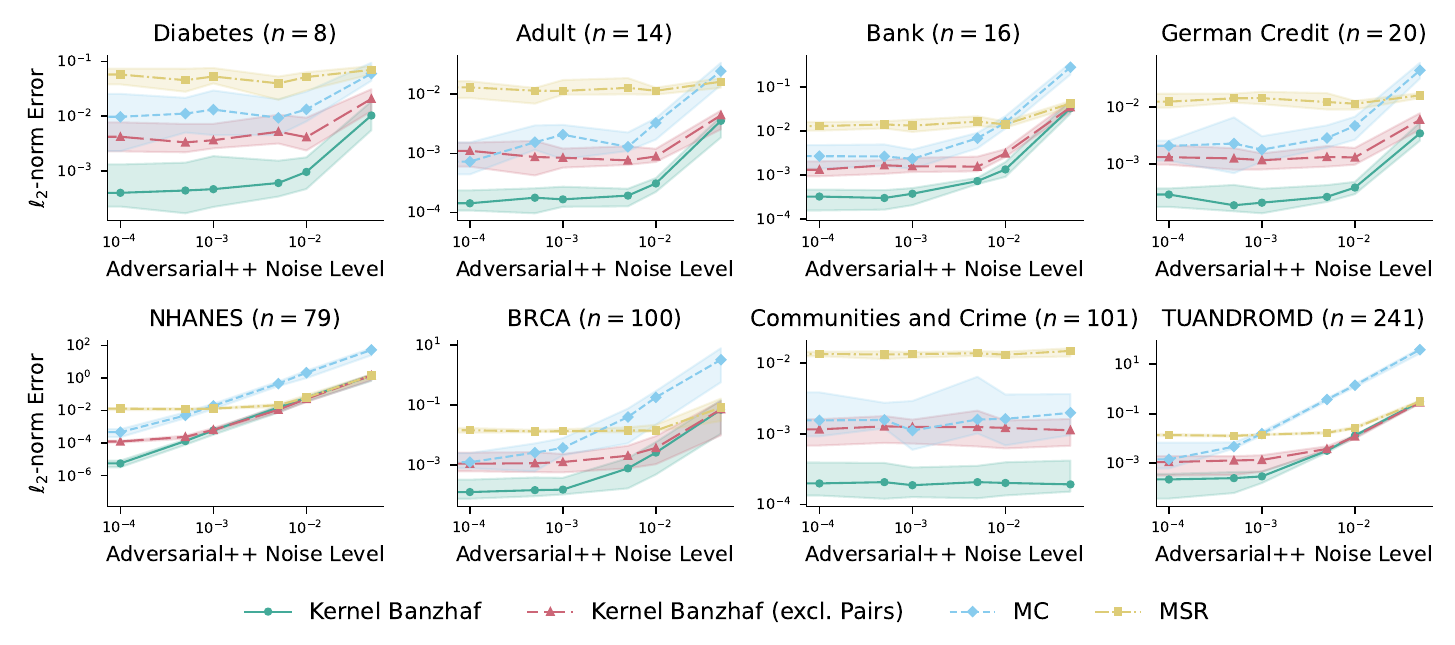}
    \caption{{Plots of relative squared $\ell_2$-norm error by adversarial noise levels across Banzhaf estimators. For each noise level $\sigma$, the estimator observes $v(S) + x$ where $x \sim N(0,\sigma)$ when \textbf{set $S$ contains feature $i$ with the largest relative error in the normal setting}.
    Similar to previous experiments, Kernel Banzhaf outperforms for lower noise levels, eventually matching its ablated version and MSR for larger noise. }}
    \label{fig:adver-plus-noise}
\end{figure}

\end{document}